\documentclass[10pt,twocolumn,letterpaper]{article}

\usepackage[pagenumbers]{cvpr} 

\definecolor{cvprblue}{rgb}{0.21,0.49,0.74}
\usepackage[pagebackref,breaklinks,colorlinks,allcolors=cvprblue]{hyperref}

\usepackage[utf8]{inputenc} 
\usepackage[T1]{fontenc}    
\usepackage{url}            
\usepackage{booktabs}       
\usepackage{amsfonts}       
\usepackage{nicefrac}       
\usepackage{microtype}      
\usepackage{graphicx}
\usepackage[table]{xcolor}
\usepackage{xcolor}         
\definecolor{Gray}{gray}{0.9}
\usepackage{threeparttable}
\usepackage{amssymb}
\usepackage{pifont}

\usepackage{amsmath}
\usepackage{amsthm}
\usepackage{array}
\usepackage{siunitx}
\usepackage{xspace}
\usepackage{longtable}
\usepackage{marvosym}
\usepackage{enumerate}
\usepackage{forest}
\usepackage{mathrsfs}
\usepackage{arydshln}
\usepackage{wrapfig}

\newtheorem{definition}{Definition}
\usepackage{makecell}
\usepackage{thmtools} 
\usepackage{thm-restate}

\usepackage{algorithm}
\usepackage{algorithmic}

\usepackage{enumitem}
\usepackage{adjustbox}

\usepackage{multirow}

\makeatletter
\newcommand*\bigcdot{\mathpalette\bigcdot@{.5}}
\newcommand*\bigcdot@[2]{\mathbin{\vcenter{\hbox{\scalebox{#2}{$\m@th#1\bullet$}}}}}
\makeatother

\usepackage[capitalize]{cleveref}
\crefname{section}{Sec.}{Secs.}
\Crefname{section}{Section}{Sections}
\Crefname{table}{Table}{Tables}
\crefname{table}{Tab.}{Tabs.}

\newcommand{\calX}{\mathcal{X}}

\newcommand{\bx}{\mathbf{x}}

\def\paperID{} 
\def\confName{CVPR}
\def\confYear{2026}

\title{Adaptive Confidence Regularization for Multimodal Failure Detection}

\author{
Moru Liu$^1$, Hao Dong$^2$, Olga Fink$^3$, Mario Trapp$^{1, 4}$\\
$^1$Technical University of Munich \quad $^2$ETH Z\"urich \quad $^3$EPFL \quad $^4$Fraunhofer IKS \\
}

\begin{document}
\maketitle

\begin{abstract}
The deployment of multimodal models in high-stakes domains, such as self-driving vehicles and medical diagnostics, demands not only strong predictive performance but also reliable mechanisms for detecting failures. In this work, we address the largely unexplored problem of failure detection in multimodal contexts. We propose Adaptive Confidence Regularization (ACR), a novel framework specifically designed to detect multimodal failures. Our approach is driven by a key observation: in most failure cases, the confidence of the multimodal prediction is significantly lower than that of at least one unimodal branch, a phenomenon we term \textit{confidence degradation}. To mitigate this, we introduce an \textit{Adaptive Confidence Loss} that penalizes such degradations during training. In addition, we propose \textit{Multimodal Feature Swapping}, a novel outlier synthesis technique that generates challenging, failure-aware training examples. By training with these synthetic failures, ACR learns to more effectively recognize and reject uncertain predictions, thereby improving overall reliability. 
Extensive experiments across four datasets, three modalities, and multiple evaluation settings demonstrate that ACR achieves consistent and robust gains. The source code will be available at \href{https://github.com/mona4399/ACR}{https://github.com/mona4399/ACR}.
\end{abstract}

\section{Introduction} \label{sec:introduction}
Multimodal models are increasingly adopted in safety-critical domains such as autonomous driving and medical diagnostics~\cite{dong2025mmdasurvey,sun2025unseen,fink2026physics}. By integrating complementary cues from diverse modalities (e.g., video, audio), they often achieve superior robustness and generalization over unimodal approaches~\citep{feng2020deep, wang2018tienet}. 
However, even state-of-the-art models can be dangerously overconfident in their erroneous predictions~\citep{zeng2025towards}, posing serious risks in high-stakes applications. In such settings, detecting untrustworthy predictions is as crucial as achieving high overall accuracy. While prior work in uncertainty estimation~\citep{lakshminarayanan2017simple}, calibration~\citep{pmlr-v70-guo17a}, and out-of-distribution (OOD) detection~\citep{liu2020energy} has aimed to mitigate overconfidence, these methods often fail to reliably flag individual predictions that should be rejected. Failure detection (FD) -- also referred to as misclassification detection or selective classification -- directly addresses this challenge by identifying unreliable predictions for potential rejection or human intervention, thereby reducing the risk of catastrophic failures~\citep{feng2022towards}. 

While FD is well-established in unimodal settings, with methods spanning confidence-based scoring~\citep{granese2021doctor,jiang2018trust}, outlier exposure~\citep{cheng2024breaking,OpenMix}, and confidence learning~\citep{ConfidNet,moon2020confidence}, its extension to multimodal systems remains largely unexplored. This gap is non-trivial, as unimodal approaches often fail to effectively leverage the complementary information across  modalities or to handle failure modes unique to multimodal data, such as signal conflict and misalignment~\citep{rasenberg2020alignment}. Furthermore, some works~\citep{dong2024multiood,li2024dpu} explore OOD detection with multiple modalities, but their settings fundamentally differ from those of FD.
To illustrate  the potential benefits of utilizing multiple modalities for FD, we present empirical results on the HMDB51 dataset~\citep{kuehne2011hmdb}. All models in this analysis were trained solely with a standard cross-entropy loss.  As shown in Figure~\ref{fig:motivation0} (left), a simple fusion of video and optical flow inputs substantially improves FD performance -- measured by AURC, AUROC, and FPR95 -- over unimodal baselines. This finding highlights \textit{the considerable potential of multimodal signals for improving FD}. Concurrently, Figure~\ref{fig:motivation0} (right) reveals that sophisticated OOD detection methods like Energy~\citep{liu2020energy}, Entropy~\citep{tian2022pixel}, and MaxLogit~\citep{hendrycks2019scaling} are outperformed by a simple Maximum Softmax Probability (MSP) baseline~\citep{hendrycks2016baseline}. Taken together, these findings demonstrate that \textit{merely adapting OOD techniques is insufficient} and motivate the development of dedicated methods tailored for multimodal FD. 

\begin{figure*}[t]
  \centering
  \includegraphics[width=0.9\linewidth]{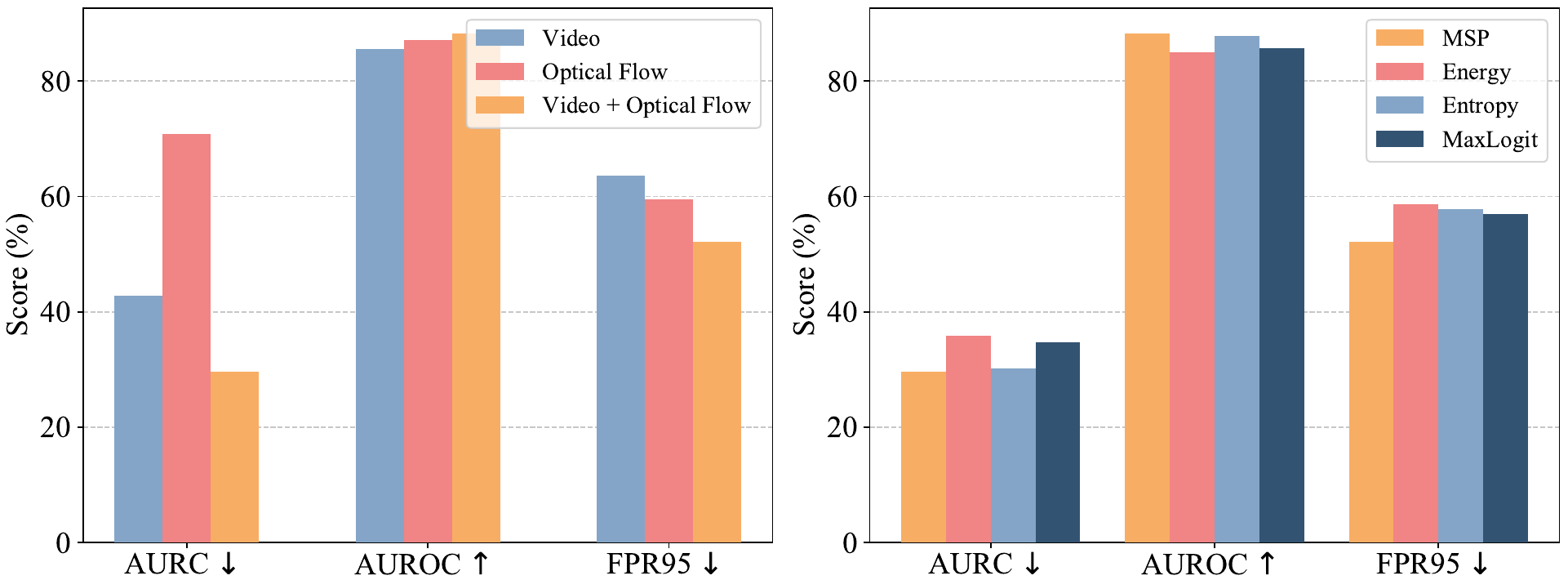}
   \vspace{-0.2cm}
   \caption{
    {(Left)} Multimodal models substantially enhance FD performance compared to unimodal models, without the need for complex designs. 
    {(Right)} Advanced OOD detection methods underperform on FD tasks, while the simple MSP baseline surprisingly remains the most effective.
}
   \label{fig:motivation0}
\end{figure*}


In this work, we identify and systematically characterize the phenomenon of \textit{confidence degradation}, a scenario where the confidence of fused multimodal predictions undesirably falls below that of individual unimodal predictions, particularly in misclassified instances. To address this, we propose Adaptive Confidence Regularization (ACR), the first dedicated framework for detecting failures (i.e., misclassifications) in multimodal systems. ACR comprises two key innovations: (1) an \textit{Adaptive Confidence Loss} that explicitly penalizes confidence degradation during training, and (2) \textit{Multimodal Feature Swapping}, a novel augmentation technique that synthesizes challenging, failure-aware training samples by swapping cross-modal embeddings. Training with the confidence penalty and failure-aware outliers improves the model’s ability to detect and reject uncertain samples, yielding gains in both accuracy and FD performance. Comprehensive experiments across five datasets and five modalities demonstrate that ACR sets a new state of the art, outperforming prior best methods by up to $9.58\%$ in AURC, $1.63\%$ in AUROC, and $15.45\%$ in FPR95. Further ablation studies under distribution shifts and multimodal OOD detection settings confirm the robustness and strong generalization of our approach. The primary contributions of this work are:
\begin{itemize}[leftmargin=*]
\setlength\itemsep{0em}
\item We highlight the importance of leveraging multimodal inputs for effective FD, and provide empirical evidence on the limitations of existing OOD detection approaches in this context. 
\item We reveal and empirically validate the phenomenon of \textit{confidence degradation} in multimodal models, showing its strong correlation with prediction failures.
\item We propose ACR, the first dedicated framework tailored to the complex task of multimodal FD. ACR integrates a novel Adaptive Confidence Loss, addressing the issue of \emph{confidence degradation}, and introduces  Multimodal Feature Swapping to further enhance confidence reliability. 
\item We perform  extensive evaluations  across diverse datasets and modalities, demonstrating the robustness and effectiveness of ACR in a wide range of scenarios.
\end{itemize}


\section{Methodology}
\subsection{Problem Setup}
\noindent\textbf{Multimodal Failure Detection} aims to detect misclassified samples using \textbf{\emph{multiple modalities}}.  {We consider a training set $\mathbb{D} = \{(\bx_i, y_i)\}_{i=1}^n$ drawn \emph{i.i.d.} from the joint data distribution $P_{\mathcal{X}\mathcal{Y}}$, where $\mathcal{X}$ is the input space and $\mathcal{Y}=\{1,2,...,C\}$ is the label space. Each sample $\mathbf{x}_i$ is composed of $M$ modalities, denoted as $\mathbf{x}_i=\{x_i^k \mid k=1,\cdots,M\}$. 
Let $f: \calX \mapsto \mathbb{R}^{C}$ be a neural network trained on samples in $P_{\mathcal{X}\mathcal{Y}}$ that predicts the label of each input sample. The $f$ in multimodal failure detection comprises $M$ feature extractors $g_k(\cdot)$ and a classifier $h(\cdot)$.} Each feature extractor $g_k(\cdot)$ extracts an embedding $\mathbf{E}^{k}$ for its corresponding modality $k$, and the classifier $h(\cdot)$ takes the combined embeddings from all modalities as input and outputs a prediction probability~$\hat{p}$:
\begin{equation}
\begin{split}
  \hat{p} = \delta(f(\mathbf{x})) 
  &= \delta(h([g_1(x^1), ..., g_M(x^M)]))  \\
   &=  \delta(h([\mathbf{E}^{1}, ..., \mathbf{E}^{M}])) ,
  \label{eqn:pred}
\end{split}
\end{equation}
where $\delta(\cdot)$ is the softmax function. 
We further include a classifier $h_k(\cdot)$ for each modality $k$ to get predictions from each modality separately, with the prediction probability from the $k$-th modality as $\hat{p}^k = \delta(h_k(g_k(x^k)))$.

To safely deploy classifier $f$ in real-world applications, it should not only be able to make accurate predictions but also \textit{distinguish and reject incorrect ones}. Formally, let $\kappa(\cdot)$ be a confidence-scoring function that quantifies the model's confidence in its prediction. With a predefined threshold $\tau \in \mathbb{R}^{+}$, the misclassified samples can be detected based on a decision function $G$ such that for a given input~$\mathbf{x}$:
\begin{equation}
\label{eq1}
G(\mathbf{x}) = \left\{ 
\begin{aligned}
 &\text{correct}~~~~~~~~~~\text{if} ~~\kappa(\mathbf{x}) \ge \tau, \\
 &\text{misclassified}~~~~\text{otherwise}.
\end{aligned}
\right.
\end{equation} 

For example, we can easily use MSP~\citep{hendrycks2016baseline} as the confidence-scoring function for a given input $\mathbf{x}$ as $\kappa(\mathbf{x}) = \max_{y \in \mathcal{Y}} \hat{p}$. Similarly, other confidence-scoring functions can be adapted from the OOD detection literature, such as MaxLogit~\citep{hendrycks2019scaling},  Energy~\citep{liu2020energy}, and Entropy~\citep{chan2021entropy}.

\subsection{Confidence Degradation: A Failure Indicator in Multimodal Systems}  \label{subsec:when-fail}

We begin by investigating the relationship between multimodal and unimodal prediction confidences to identify systematic patterns that distinguish correct classifications from errors. Our analysis, which uses MSP for confidence scoring, spans four diverse action recognition datasets: HMDB51~\citep{kuehne2011hmdb}, EPIC-Kitchens~\citep{Damen2018EPICKITCHENS}, HAC~\citep{dong2023SimMMDG}, and Kinetics-600~\citep{kay2017kinetics}. We consistently observe a specific failure pattern where the confidence of multimodal prediction $\hat{p}$ falls below that of an individual modality $\hat{p}^k$. We formalize this phenomenon as follows: 

\begin{definition}[Confidence Degradation]
\label{def:confidence-degradation}
A sample is considered to exhibit confidence degradation if the confidence of the fused multimodal prediction is strictly lower than that of at least one of its unimodal counterparts:
\[
\exists\, k \in \{1, \dots, M\} \quad \text{s.t.} \quad \max_{y \in \mathcal{Y}} \hat{p} < \max_{y \in \mathcal{Y}} \hat{p}^k.
\]

\end{definition}




\begin{figure}[t]
  \centering  
\includegraphics[width=\linewidth]{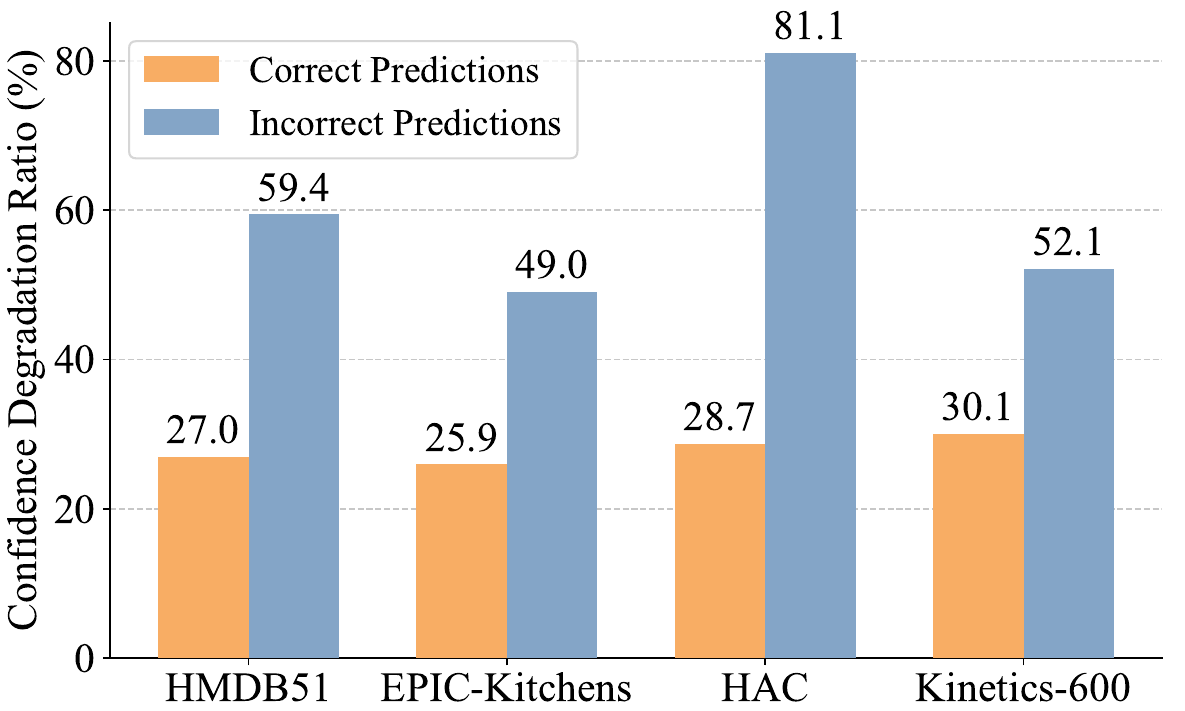}
\vspace{-0.7cm}
   \caption{Misclassified samples exhibit a significantly higher proportion of confidence degradation compared to correctly classified ones. 
}
   \label{fig:motivation}
\vspace{-0.4cm}
\end{figure}

Figure~\ref{fig:motivation} illustrates the central finding: \textit{confidence degradation is strongly associated with prediction failures}. Across all datasets, misclassified samples consistently exhibit a markedly higher rate of degradation than correct predictions, with increases of $32.4\%$ on HMDB51, $23.1\%$ on EPIC-Kitchens, $52.4\%$ on HAC, and $22.0\%$ on Kinetics-600. This suggests that \textit{\textbf{failures in multimodal systems frequently coincide with such confidence degradation}}. One explanation is that misclassified samples often contain conflicting or ambiguous signals across modalities, which increases uncertainty. When their unimodal outputs are fused, this uncertainty frequently causes the combined confidence to drop below that of at least one unimodal branch. In contrast, correctly classified samples typically exhibit agreement across modalities, leading to boosted or at least non‐degraded fusion confidence. 
This directly motivates our adaptive training objective, which explicitly penalizes confidence degradation. 

\subsection{Proposed ACR Framework}

We introduce Adaptive Confidence Regularization (ACR), a novel framework for multimodal failure detection that integrates two complementary components (Figure~\ref{fig:method}). First, motivated by the strong correlation between misclassification and confidence degradation, we propose an Adaptive Confidence Loss that directly penalizes this degradation during training. Second, we introduce Multimodal Feature Swapping, an outlier synthesis technique that generates challenging, failure-aware training samples by exchanging cross-modal embeddings. By training on these synthesized failures, ACR learns a more robust uncertainty representation, improving its ability to reject unreliable predictions.


The ACR architecture processes inputs from multiple modalities. Each input is passed through a modality-specific encoder to yield an embedding, e.g., $\mathbf{E}^{1}$ and $\mathbf{E}^{2}$ for modalities $1$ and $2$. These embeddings are then concatenated, $\mathbf{E} = [\mathbf{E}^{1}, \mathbf{E}^{2}]$, and fed into a fusion classifier to produce the final multimodal prediction $\hat{p}$ with confidence $\textit{conf} = \max_{y \in \mathcal{Y}} \hat{p}$. In parallel, each unimodal embedding $\mathbf{E}^{k}$ is also passed through a dedicated classifier to obtain the unimodal prediction $\hat{p}^k$ and its confidence $\textit{conf}_k$.




\begin{figure*}[t]
  \centering
  \includegraphics[width=0.75\linewidth]{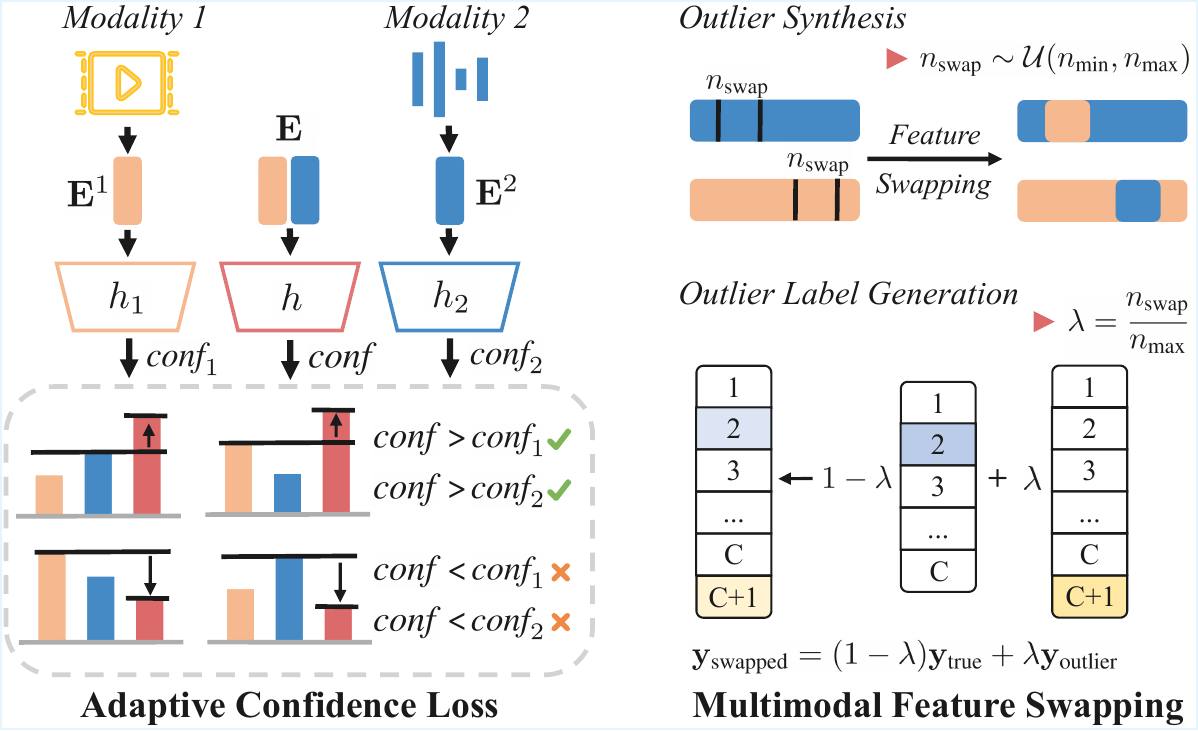}
   \vspace{-0.3cm}
   \caption{
  Our ACR framework integrates two principal components. The Adaptive Confidence Loss is designed to penalize the phenomenon of confidence degradation. The Multimodal Feature Swapping serves to generate challenging, failure-aware training instances. This process enables the model to learn to more effectively identify and reject uncertain samples.
}
   \label{fig:method}
\end{figure*}

\subsection{Adaptive Confidence Loss}\label{subsec:acl}

Ideally, \textbf{\textit{effective multimodal fusion should achieve synergy}}, where the confidence of a fused prediction surpasses that of any single modality, assuming all modalities provide predictive information for the target~\citep{wu2022characterizing}. This reflects the successful integration of complementary information to reduce uncertainty and reinforce the decision. However, as we observe in Section~\ref{subsec:when-fail}, misclassifications are strongly correlated with confidence degradation, a phenomenon where the fused confidence falls below that of a unimodal counterpart. Such degradation often arises from conflicting or unreliable signals and serves as a strong indicator of prediction failure. Motivated by this observation, we introduce the \emph{Adaptive Confidence Loss} (ACL), which encourages the fused confidence to be at least as high as that of any individual modality. 
For a two-modality case, ACL is defined as:
\begin{equation}
\mathcal{L}_{\text{acl}} = \frac{1}{2} \left( \max(0, \textit{conf}_1 - \textit{conf}) + \max(0, \textit{conf}_2 - \textit{conf}) \right).
\end{equation}
The ACL imposes no penalty when the fused confidence surpasses both unimodal confidences; however, it increasingly penalizes instances where the fused confidence is lower than that of either individual modality. 
Consequently, ACL encourages the fusion mechanism to \textit{learn improved information integration}, such that combined evidence from different modalities leads to a more confident prediction. By effectively integrating complementary information from different modalities, \textbf{\textit{ACL enhances prediction reliability}}. 
Furthermore, \textbf{\textit{ACL mitigates unimodal overconfidence}} by penalizing the model when a high-confidence prediction from one modality conflicts with another. To minimize this cross-modal penalty during training, the model learns to reduce the confidence of the unreliable unimodal stream itself. This process effectively regularizes the unimodal networks, forcing them to become better calibrated and less prone to being "confidently wrong". As a result, the model can integrate information more effectively and produce more reliable multimodal predictions. Additional discussion on ACL is provided in the Appendix.

\subsection{Multimodal Feature Swapping}

\begin{figure*}[t]
  \centering
  \includegraphics[width=\linewidth]{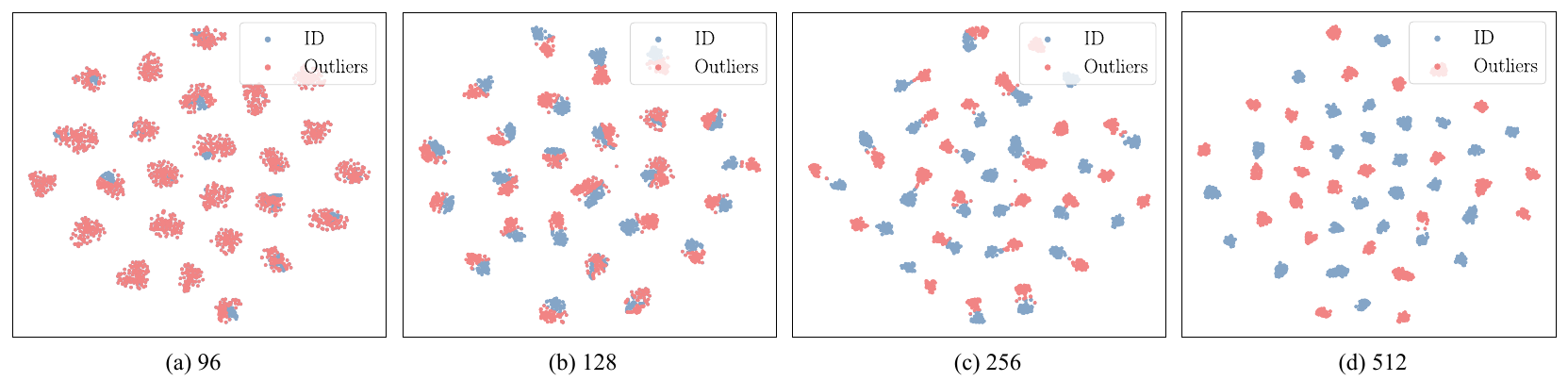}
  \vspace{-0.7cm}
  \caption{Visualization on outliers generated by Multimodal Feature Swapping with different $n_{\text{swap}}$ (96, 128, 256, 512). Small swaps produce hard negatives that lie near the in-distribution manifold, while larger swaps create more distinct outliers further away.}
  \label{fig:mfs}
\end{figure*}


While Outlier Exposure (OE) is an effective technique for improving OOD detection~\citep{hendrycks2018deep,zhang2023mixture}, it has been shown to be ineffective for FD~\citep{OpenMix}. This is because OE regularizes the decision boundary by compressing the confidence distribution of in-distribution (ID) samples, which inadvertently makes it harder to distinguish correct ID predictions from incorrect ones. A related challenge, particularly in multimodal settings, is the lack of training data that realistically emulates system failures, such as conflicting modality cues or sensor corruption. Although approaches like OpenMix~\citep{OpenMix} attempt to address these issues by interpolating between ID and outlier data, they have two critical shortcomings for multimodal tasks. First, they depend on large, auxiliary outlier datasets that are often impractical or unavailable. Second, as a fundamentally unimodal method, OpenMix cannot synthesize the complex failure modes that arise from cross-modal interactions.




To generate challenging, failure-aware outliers without external data, we propose \emph{Multimodal Feature Swapping} (MFS). MFS operates by dynamically swapping multimodal feature embeddings and assigning them corresponding soft labels (as illustrated in Figure~\ref{fig:method}). By generating outliers directly in feature space, MFS ensures computational efficiency and compatibility with various modalities. MFS is designed to ensure that the synthesized features remain distinct from ID features while preserving semantic consistency. Given ID features $\mathbf{E} = [\mathbf{E}^{1}, \mathbf{E}^{2}]$, where $\mathbf{E}^{1}$ represents features from modality~$1$ and $\mathbf{E}^{2}$ from modality $2$, MFS randomly selects a subset of $n_{\text{swap}} \sim \mathcal{U}(n_{\text{min}}, n_{\text{max}})$ continuous feature dimensions from each modality. These selected dimensions are then swapped to obtain new feature representations $\widetilde{\mathbf{E}}^{1}$ and $\widetilde{\mathbf{E}}^{2}$, which are subsequently concatenated to form the multimodal outlier features $\mathbf{E}_o = [\widetilde{\mathbf{E}}^{1}, \widetilde{\mathbf{E}}^{2}]$. A prediction $\hat{p}_o$ is then obtained from $\mathbf{E}_o$ as $\hat{p}_o = \delta(h([\mathbf{E}_o))$. 
To supervise these synthesized outliers, we generate soft labels by interpolating between the original ground-truth one-hot label $\mathbf{y}_{\text{true}}$ and an additional class designated for outliers (e.g., $\mathbf{y}_{\text{outlier}} = C+1$). The weight $\lambda$ for this label interpolation reflects the proportion of features swapped:
\begin{equation}
\mathbf{y}_{\text{swapped}} = (1-\lambda) \mathbf{y}_{\text{true}} + \lambda \mathbf{y}_{\text{outlier}}, \quad \text{where} \quad \lambda = \frac{n_{\text{swap}}}{n_{\text{max}}}.
\end{equation}

MFS generates failure-aware outliers by partially swapping cross-modal features. Such swapping preserves intra-modality semantics while disrupting cross-modal consistency, capturing \textit{a critical and common failure mode} in multimodal systems.
Figure~\ref{fig:mfs} illustrates a t-SNE visualization of the embedding space under different $n_{\text{swap}}$ values. For small $n_{\text{swap}}$, the generated outliers (red) lie close to the ID clusters (blue), acting as hard negatives. As $n_{\text{swap}}$ increases, the outliers gradually move farther from the ID manifold, confirming that MFS provides a controllable mechanism for generating diverse and realistic failure cases. This property is particularly valuable for training models that must remain sensitive to subtle misclassification signals, especially in multimodal scenarios where errors often stem from partial or conflicting evidence. By introducing corrupted or ambiguous multimodal outliers, MFS encodes the prior knowledge of \textbf{\textit{what is uncertain and should be assigned low confidence}}, thereby teaching the model to recognize broader patterns of uncertainty and enhancing its robustness in detecting real-world misclassifications. Additional discussion on MFS is provided in the Appendix.

Overall, MFS offers a simple, generalizable, and computationally efficient approach to simulating realistic failure cases for multimodal failure detection without requiring external data.
The loss for the synthetic outliers is defined as:
\begin{equation}
\mathcal{L}_{\text{outlier}} = \mathrm{CE}(\hat{p}_o, \mathbf{y}_{\text{swapped}}),
\end{equation}
where $\mathrm{CE}$ denotes the cross-entropy loss. The final training objective integrates all components:
\begin{equation}
\mathcal{L}_{\text{total}} = \mathcal{L}_{\text{cls}} + \mathcal{L}_{\text{outlier}} + \lambda_{\text{acl}} \mathcal{L}_{\text{acl}},
\end{equation}
where $\mathcal{L}_{\text{cls}}$ is the cross-entropy loss for the original training samples, and $\lambda_{\text{acl}}$ is a hyperparameter that balances the influence of $\mathcal{L}_{\text{acl}}$.



\subsection{Inference}
Our method focuses on detecting misclassified samples within known classes. Therefore, during the test phase, evaluation is performed exclusively on the original $C$ classes. Specifically, for a given input $\mathbf{x}$, the predicted label is $\hat{y} = \mathop{\mathrm{argmax}}_{y \in \mathcal{Y}} \hat{p}$, and the corresponding confidence is determined using the common MSP score, \emph{i.e.}, $\kappa(\mathbf{x}) = \max_{y \in \mathcal{Y}} \hat{p}$. 

\section{Experiments}

\subsection{Experimental Setup}
\label{subsec:Datasets}
\noindent\textbf{Datasets.}  We evaluate our proposed framework on four action recognition datasets sourced from the MultiOOD benchmark~\citep{dong2024multiood}: HMDB51~\citep{kuehne2011hmdb}, Kinetics-600~\citep{kay2017kinetics}, HAC~\citep{dong2023SimMMDG}, and EPIC-Kitchens~\citep{Damen2018EPICKITCHENS}. Each of these datasets incorporates video and optical flow modalities. For the HAC dataset, we also include evaluations utilizing the audio modality. 
Further details on each dataset are in the Appendix.


\noindent\textbf{Implementation.}
We conduct experiments across three modalities: video, audio, and optical flow. The MMAction2~\citep{2020mmaction2} toolkit is adopted for all experiments. To encode visual information, we utilize the SlowFast network~\citep{Feichtenhofer_2019_ICCV}, initialized with weights pre-trained on the Kinetics-400 dataset~\citep{kay2017kinetics}. For the audio encoder, we employ a ResNet-18 architecture~\citep{he2016deep}, with weights initialized from the VGGSound pre-trained checkpoint~\citep{9053174}. Similarly, the optical flow encoder uses the SlowFast network, configured with a slow-only pathway and also leveraging pre-trained weights from Kinetics-400~\citep{kay2017kinetics}.
The Adam optimizer~\citep{Adam} is used for model training, with a learning rate of $0.0001$ and a batch size of $16$. The hyperparameters for our proposed method are set as follows: $\lambda_{\text{acl}} = 2.0$, $n_{\text{min}} = 32$, $n_{\text{max}} = 256$. We train the models for $50$ epochs on an NVIDIA RTX 3090 GPU and select the model with the best performance on the validation dataset. 



\noindent\textbf{Baselines.}
We compare our approach against several standard confidence-scoring functions, including MSP~\citep{hendrycks2016baseline}, MaxLogit~\citep{hendrycks2019scaling}, Energy~\citep{liu2020energy}, and Entropy~\citep{chan2021entropy}. Additionally, we adapt unimodal FD methods for our framework, including DOCTOR~\citep{granese2021doctor} and OpenMix~\citep{OpenMix}, 
along with the outlier synthesis techniques Mixup~\citep{zhang2017mixup}, RegMixup~\citep{pinto2022using}. We also include established training strategies, namely CRL~\citep{moon2020confidence} and A2D~\citep{dong2024multiood}, where A2D is designed for multimodal OOD detection. These baselines collectively represent a diverse array of techniques for FD. 


\setlength{\tabcolsep}{1.5pt}

\begin{table*}[t!]
\centering
\resizebox{0.95\textwidth}{!}{
\begin{tabular}{l@{~~~~~}| c cccc cccc cccc cccc }
\hline

 \multicolumn{1}{c|}{{} } &\multicolumn{4}{c|}{HMDB51} & \multicolumn{4}{c|}{EPIC-Kitchens} & \multicolumn{4}{c|}{HAC} &\multicolumn{4}{c}{Kinetics-600} 
 \\
   \multicolumn{1}{c|}{{}} & AURC$\downarrow$ & AUROC$\uparrow$ & FPR95$\downarrow$ &  \multicolumn{1}{c|}{ACC$\uparrow$} &  AURC$\downarrow$ & AUROC$\uparrow$ & FPR95$\downarrow$ &  \multicolumn{1}{c|}{ACC$\uparrow$}  & AURC$\downarrow$ & AUROC$\uparrow$ & FPR95$\downarrow$ &  \multicolumn{1}{c|}{ACC$\uparrow$} & AURC$\downarrow$ & AUROC$\uparrow$ & FPR95$\downarrow$ &  \multicolumn{1}{c}{ACC$\uparrow$}\\
\hline

 
 

 \multicolumn{1}{c|}{MaxLogit}&34.76 &85.65  & 57.02 & \multicolumn{1}{c|}{86.20}  & 114.22   &76.92  & 80.00 & \multicolumn{1}{c|}{74.25}  & 53.61& 85.12 &58.97  & \multicolumn{1}{c|}{82.11}  &  63.90&81.25  & 69.85 & \multicolumn{1}{c}{81.24}  \\

 \multicolumn{1}{c|}{Energy}&35.78 &85.03  &58.68  & \multicolumn{1}{c|}{86.20}  &  114.91  &76.67  &78.95  & \multicolumn{1}{c|}{74.25}  &  54.33& 84.80  &58.97  & \multicolumn{1}{c|}{82.11}  &  66.48& 80.12 &76.58  & \multicolumn{1}{c}{81.24}  \\
 
 \multicolumn{1}{c|}{Entropy}& 30.24&  87.87&  57.85& \multicolumn{1}{c|}{86.20}  & 114.09 &  76.83& 78.42 & \multicolumn{1}{c|}{74.25}  & 42.63& 89.46 &61.54  & \multicolumn{1}{c|}{82.11}  &  47.30&86.85  & 65.22 & \multicolumn{1}{c}{81.24}  \\
 
 \multicolumn{1}{c|}{MSP}&29.56 &88.28  &52.07  & \multicolumn{1}{c|}{86.20}  &  115.03  & 76.52 & 76.84 & \multicolumn{1}{c|}{74.25}  & 42.90& 89.27  &66.67  & \multicolumn{1}{c|}{82.11}  &46.29  &87.33  & 61.29 & \multicolumn{1}{c}{81.24}  \\

 \multicolumn{1}{c|}{DOCTOR}&29.65 &88.42  &52.46  & \multicolumn{1}{c|}{86.20}  & 114.92   &76.57  &79.47  & \multicolumn{1}{c|}{74.25}  & 42.60& 89.46  &64.10  & \multicolumn{1}{c|}{82.11}  & 46.37 &87.28  &62.27  & \multicolumn{1}{c}{81.24}  \\
 
 \multicolumn{1}{c|}{Mixup}&36.52 & 87.98 &50.00  & \multicolumn{1}{c|}{84.72}  &  110.54  &77.72  & 75.41 & \multicolumn{1}{c|}{75.20}  &34.06 & 87.88 & 55.88 & \multicolumn{1}{c|}{84.40}  & 50.56 & 86.87 &60.57  & \multicolumn{1}{c}{80.58}  \\

 \multicolumn{1}{c|}{RegMixup}&29.86 & 88.25 & 55.37 & \multicolumn{1}{c|}{86.20}  &  105.25  & 79.26 &78.19  & \multicolumn{1}{c|}{74.53}  & 50.28& 82.83 & 72.22 & \multicolumn{1}{c|}{83.49}  &51.44  &86.06  &62.71  & \multicolumn{1}{c}{81.16}  \\

 \multicolumn{1}{c|}{OpenMix}& 24.15& 90.13 & 51.33 & \multicolumn{1}{c|}{87.12}  &    112.14& 78.46 &73.68  & \multicolumn{1}{c|}{74.25}  & 35.42& 87.51 & 54.84 & \multicolumn{1}{c|}{83.03}  &46.73  &87.69  &60.27  & \multicolumn{1}{c}{80.79}  \\
 

 \multicolumn{1}{c|}{A2D}& 25.01& 89.79 & 47.01 & \multicolumn{1}{c|}{86.66}  & 109.90   &77.85  & 76.72 & \multicolumn{1}{c|}{74.39}  &45.89 &89.77  &57.14  & \multicolumn{1}{c|}{83.94}  &49.26  &87.97  &59.79  & \multicolumn{1}{c}{79.97}  \\

 \multicolumn{1}{c|}{CRL}& 26.44&90.39  &46.40  & \multicolumn{1}{c|}{85.75}  &  107.42  &78.33  &79.17  & \multicolumn{1}{c|}{73.98}  &36.46 & 86.53 & 59.38 & \multicolumn{1}{c|}{83.49}  &  49.16& 87.29 &61.73  & \multicolumn{1}{c}{80.47}  \\


 \rowcolor{gray!20}
 \multicolumn{1}{c|}{ACR (ours)}& \textbf{19.97}& \textbf{92.02} &\textbf{41.96}  & \multicolumn{1}{c|}{\textbf{87.23}}  &\textbf{103.25}    &\textbf{79.27}  &\textbf{71.58}  & \multicolumn{1}{c|}{\textbf{75.20}}  &\textbf{27.41} &\textbf{91.48}  &\textbf{39.39}  & \multicolumn{1}{c|}{\textbf{84.86}}  &\textbf{41.85}  &\textbf{88.99}  &\textbf{55.89}  & \multicolumn{1}{c}{\textbf{81.45}}  \\

 \hline

\end{tabular} 
}
\vspace{-0.2cm}
\caption{FD performance on action recognition datasets with video and optical flow modalities.}
\vspace{-0.3cm}
\label{tab:FD1} 
\end{table*}

\noindent\textbf{Evaluation Metrics.} Following~\citep{OpenMix}, we assess FD performance using the following metrics:
(1) \textbf{AURC} (Area Under the Risk-Coverage Curve), which measures the model's risk (error rate) as a function of coverage (fraction of samples retained). The AURC value is multiplied by $10^3$ following~\citep{OpenMix}.
(2) \textbf{AUROC} (Area Under the Receiver Operating Characteristic Curve), quantifying the trade-off between the true positive rate (TPR) and the false positive rate (FPR);
(3) \textbf{FPR95} (False Positive Rate at 95\% TPR), indicating the proportion of incorrectly classified samples that are misidentified as correct when the TPR is fixed at $95\%$;
(4) \textbf{ACC} (Accuracy), representing the standard test accuracy on the ID data.

\subsection{Main Results} \label{subsec:main_results}

\noindent\textbf{Performance on Multimodal FD.}
Table~\ref{tab:FD1} presents a comparative analysis of our method against various baseline approaches across four datasets—HMDB51, EPIC-Kitchens, HAC, and Kinetics-600—utilizing video and optical flow as input modalities. Our proposed method consistently outperforms all baselines across key FD metrics. For instance, on the HMDB51 dataset, our method reduces the FPR95 from $52.07\%$ to $41.96\%$ and improves the AUROC from $88.28\%$ to $92.02\%$ when compared to the MSP baseline. On the HAC dataset, it achieves a reduction in AURC from $42.90\%$ to $27.41\%$ and an increase in AUROC from $89.27\%$ to $91.48\%$. Furthermore, on the Kinetics-600 dataset, our method reduces the FPR95 from $61.29\%$ to $55.89\%$. In addition to these FD improvements, our method even improves classification accuracy on all evaluated datasets. These consistent gains observed across diverse video domains underscore the strong generalization capabilities and robustness of the proposed method. 


\noindent\textbf{Performance under Different Modality Combinations.}  
\Cref{tab:FD-vfa} presents results comparing our method against baseline approaches on the HAC dataset under various modality combinations: video+audio, optical flow+audio, and video+optical flow+audio. While \Cref{tab:FD1} exclusively reports results using video and optical flow, this evaluation investigates the generalization of our method across different modality configurations. Our method surpasses baselines in most scenarios, achieving average improvements of $8.39\%$ in AURC, $1.51\%$ in AUROC, and $10.65\%$ in FPR95 relative to the strongest baseline. Concurrently, it enhances classification accuracy from $81.19\%$ to $82.42\%$. These improvements demonstrate the robustness of our approach to diverse modality configurations and its efficacy in enhancing both accuracy and FD performance. 

\begin{figure}
  \centering  
\includegraphics[width=\linewidth]{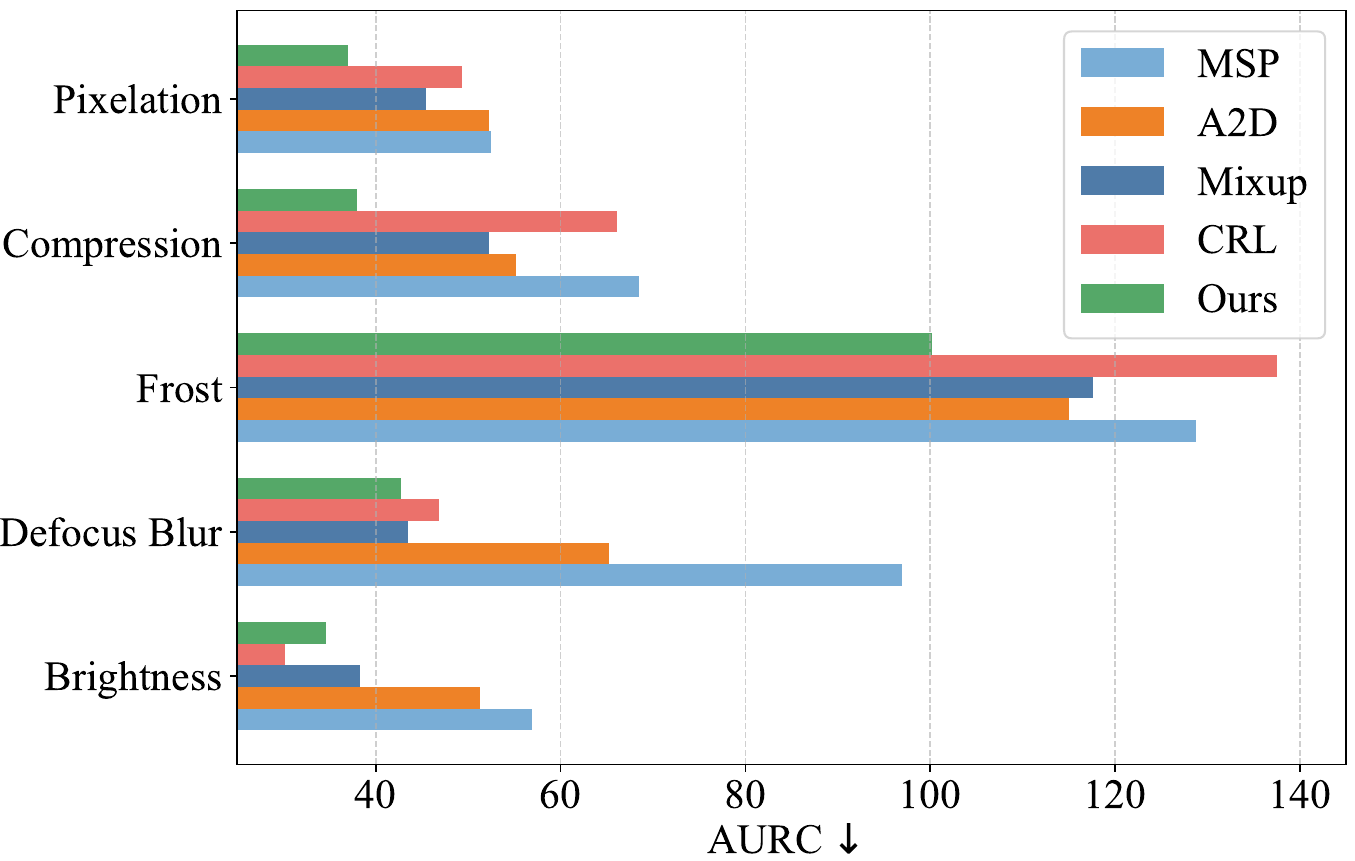}
\vspace{-0.7cm}
   \caption{FD under distribution shift on HAC dataset. The performance of five types of corruption on videos under the severity level of $5$ is reported.
   }
   \label{fig:corrupt}
\vspace{-0.3cm}
\end{figure}

\begin{table}[t!]
\centering
\resizebox{0.7\linewidth}{!}{
\begin{tabular}{l | c c c c}
\hline
\multicolumn{1}{c|}{} & AURC$\downarrow$ & AUROC$\uparrow$ & FPR95$\downarrow$ & ACC$\uparrow$ \\
\hline

MSP         & 29.56& 88.28&52.07  & \multicolumn{1}{c}{86.20} \\

ACL         & 24.48& 90.32&43.97  & \multicolumn{1}{c}{86.77} \\

MFS        & 25.11& 90.55 &46.22  & \multicolumn{1}{c}{86.43}  \\

\rowcolor{gray!20} 
ACL + MFS & \textbf{19.97}& \textbf{92.02} &\textbf{41.96}  & \multicolumn{1}{c}{\textbf{87.23}}\\

\hline
\end{tabular}
}
\vspace{-0.2cm}
\caption{Effect of each component.}
\vspace{-0.3cm}
\label{tab:FD-comp}
\end{table}

\setlength{\tabcolsep}{1.5pt}

\begin{table*}[t!]
\centering
\resizebox{0.95\textwidth}{!}{
\begin{tabular}{l@{~~~~~}| c cccc cccc cccc cccc }
\hline

 \multicolumn{1}{c|}{{} } &\multicolumn{4}{c|}{video+audio} & \multicolumn{4}{c|}{optical flow+audio} & \multicolumn{4}{c|}{video+optical flow+audio} &\multicolumn{4}{c}{Average} 
 \\
   \multicolumn{1}{c|}{{}} & AURC$\downarrow$ & AUROC$\uparrow$ & FPR95$\downarrow$ &  \multicolumn{1}{c|}{ACC$\uparrow$} &  AURC$\downarrow$ & AUROC$\uparrow$ & FPR95$\downarrow$ &  \multicolumn{1}{c|}{ACC$\uparrow$}  & AURC$\downarrow$ & AUROC$\uparrow$ & FPR95$\downarrow$ &  \multicolumn{1}{c|}{ACC$\uparrow$} & AURC$\downarrow$ & AUROC$\uparrow$ & FPR95$\downarrow$ &  \multicolumn{1}{c}{ACC$\uparrow$}\\
\hline

 \multicolumn{1}{c|}{MaxLogit}& 28.35& 84.41 &68.00  & \multicolumn{1}{c|}{88.07}  &   117.73 & 83.08 & 72.46 & \multicolumn{1}{c|}{68.35}  &26.34 &87.82  &75.00  & \multicolumn{1}{c|}{87.16}  &57.47 & 85.10 & 71.82 & \multicolumn{1}{c}{81.19}  \\

 \multicolumn{1}{c|}{Energy}& 29.32&  83.90& 68.00 & \multicolumn{1}{c|}{88.07}  & 120.43   &82.37  &75.36  & \multicolumn{1}{c|}{68.35}  &27.63 & 86.99 &75.00  & \multicolumn{1}{c|}{87.16}  & 59.13 & 84.42 & 72.79  & \multicolumn{1}{c}{81.19}  \\
 
 \multicolumn{1}{c|}{Entropy}& 25.10&  86.03& 64.00 & \multicolumn{1}{c|}{88.07}  &    116.95& 82.75 &66.67  & \multicolumn{1}{c|}{68.35}  &21.40 &91.00  & 57.14 & \multicolumn{1}{c|}{87.16}  & 54.48 & 86.59 & 62.60& \multicolumn{1}{c}{81.19}  \\
 
 \multicolumn{1}{c|}{MSP}& 26.79& 86.30& 57.69& \multicolumn{1}{c|}{88.07}  &  118.04  & 82.46 & 71.01 & \multicolumn{1}{c|}{68.35}  & 20.99&91.35  &42.86  & \multicolumn{1}{c|}{87.16} & 55.27 & 86.70 & 57.19 & \multicolumn{1}{c}{81.19}  \\

 \multicolumn{1}{c|}{DOCTOR}&27.10 & 86.02 & 53.85 & \multicolumn{1}{c|}{\textbf{88.07}}  & 118.03   & 82.51 & 72.46 & \multicolumn{1}{c|}{68.35}  & 21.03& 91.33 &46.43  & \multicolumn{1}{c|}{87.16}  &55.39 & 86.62 & 57.58& \multicolumn{1}{c}{81.19}  \\
 

 \multicolumn{1}{c|}{Mixup}&26.03 &88.72 &54.84  & \multicolumn{1}{c|}{85.78}  &   147.96 &76.69  &78.57  & \multicolumn{1}{c|}{67.89}  & 21.76& 86.96 & 69.57 & \multicolumn{1}{c|}{89.45}  &65.25 & 84.12 & 67.66 & \multicolumn{1}{c}{81.04}  \\
 
 \multicolumn{1}{c|}{RegMixup}&25.58 & 89.56 &73.33  & \multicolumn{1}{c|}{86.24}  &    134.26& 78.00 &86.96  & \multicolumn{1}{c|}{68.35}  & 29.77& 86.43 &68.97  & \multicolumn{1}{c|}{86.71}  &63.20 & 84.66 & 76.42 & \multicolumn{1}{c}{80.43}  \\

 \multicolumn{1}{c|}{OpenMix}&26.09 & 88.33 &64.29  & \multicolumn{1}{c|}{87.16}  & 117.48   & 81.99 &71.88  & \multicolumn{1}{c|}{70.64}  & 20.10&91.04  &48.15  & \multicolumn{1}{c|}{87.61}  & 54.56 & 87.12 & 61.44 & \multicolumn{1}{c}{81.80}  \\
 

 \multicolumn{1}{c|}{A2D}&59.65 &81.36  & 60.53 & \multicolumn{1}{c|}{82.57}  &  108.07  &81.27  & 79.37 & \multicolumn{1}{c|}{71.10}  & 20.16& 90.81 &48.28  & \multicolumn{1}{c|}{86.70}  &62.63 & 84.48 & 62.73& \multicolumn{1}{c}{80.12}   \\

 \multicolumn{1}{c|}{CRL}& 26.88& 89.29 & 43.33 & \multicolumn{1}{c|}{86.24}  & 118.65   &83.47  & 71.83 & \multicolumn{1}{c|}{67.43}  & 31.16& 87.42 &64.52  & \multicolumn{1}{c|}{85.78}  &58.90 & 86.73 & 59.89& \multicolumn{1}{c}{79.82}  \\

 \rowcolor{gray!20}
 \multicolumn{1}{c|}{ACR (ours)}&\textbf{24.70}& \textbf{89.67} & \textbf{41.38} & \multicolumn{1}{c|}{86.70} &\textbf{98.49} &\textbf{83.96}  &\textbf{63.47} & \multicolumn{1}{c|}{\textbf{71.10}}&\textbf{15.09}&\textbf{92.26}  &\textbf{34.78}  & \multicolumn{1}{c|}{\textbf{89.45}}  &\textbf{46.09} & \textbf{88.63} & \textbf{46.54} & \multicolumn{1}{c}{\textbf{82.42}}  \\

 \hline

\end{tabular} 
}
\vspace{-0.2cm}
\caption{FD performance on HAC dataset with different modality combinations. }
\vspace{-0.3cm}
\label{tab:FD-vfa} 
\end{table*}

\noindent\textbf{Performances under Distribution Shifts.} In practical applications, environmental conditions can change rapidly (e.g., weather transitioning from sunny to cloudy, then to rainy), necessitating reliable model decisions under such distribution or domain shifts. To simulate these scenarios, we evaluate model performance under various data corruptions with a severity level of $5$, including Defocus Blur, Frost, Brightness, Pixelate, and JPEG Compression. Models were trained on the clean HAC dataset using video and optical flow modalities, with corruptions introduced to the video modality exclusively during the testing phase. As illustrated in Figure \ref{fig:corrupt}, our framework demonstrates significantly improved FD performance on AURC under various corruptions in the majority of tested cases.





\subsection{Ablation Studies}

\noindent\textbf{Effect of Each Component.}  Table \ref{tab:FD-comp} summarizes performance gains contributed by each component 
of our framework, evaluated on the HMDB dataset. Commencing from the MSP baseline, the individual incorporation of either the Adaptive Confidence Loss or the Multimodal Feature Swapping module leads to performance improvements across all metrics. Crucially, the combination of both components yields the most substantial overall enhancements. These findings underscore the complementary strengths of the two proposed modules. 

\noindent\textbf{Robustness to Different Architectures.}  
To demonstrate the scalability of our approach, we evaluate 
\setlength{\tabcolsep}{2pt}
\begin{table}[t!]
\centering
\resizebox{0.8\linewidth}{!}{
\begin{tabular}{l | c c c c}
\hline
\multicolumn{1}{c|}{} & AURC$\downarrow$ & AUROC$\uparrow$ & FPR95$\downarrow$ & Accuracy$\uparrow$ \\
\hline
MSP         & 31.77 & 88.17 & 58.59 & 85.40 \\
DOCTOR      & 31.76 & 88.18 & 59.38 & 85.40 \\
Mixup       & 41.12 & 87.29 & 56.76 & 83.12 \\
RegMixup    & 41.67 & 89.41 & 54.27 & 81.30 \\
OpenMix      & 33.16 & 88.30 & 55.12 & 84.26 \\
A2D         & 34.47 & 88.38 & 55.80& 84.72 \\
CRL         & 37.13 & 89.09 & 60.78 & 82.55 \\

\rowcolor{gray!20} 
ACR (ours) & \textbf{27.73} & \textbf{90.00} & \textbf{51.56} & \textbf{85.40} \\
\hline
\end{tabular}
}
\vspace{-0.2cm}
\caption{Ablation on different architectures.}
\vspace{-0.3cm}
\label{tab:FD-i3d}
\end{table}

its performance using alternative backbone encoders, specifically, 
I3D~\citep{carreira2017quo} and TSN~\citep{wang2016temporal}, for extracting video and optical flow features. 
The results on the HMDB51 dataset are presented in Table~\ref{tab:FD-i3d}. Despite the utilization of lighter and structurally distinct architectures, our method maintains competitive performance across all four evaluation metrics. It consistently achieves lower FPR95 and AURC, and higher AUROC values compared to all baseline methods. 

\noindent\textbf{ACR Improves Multimodal OOD Detection.} A reliable multimodal system should be capable of distinguishing both OOD samples and misclassified ID samples from correct predictions. Therefore, in addition to FD, we investigate the OOD detection capabilities of our method using the MultiOOD benchmark~\citep{dong2024multiood}, focusing on video and optical flow modalities. The ID dataset employed is HMDB51. For OOD datasets, we utilize Kinetics-600, UCF101~\citep{soomro2012ucf101}, EPIC-Kitchens, and HAC. Performance is evaluated using AUROC, FPR95, and ID Accuracy. We train models using both the A2D+NP-Mix (AN)~\citep{dong2024multiood} strategies in MultiOOD and our ACR framework, and subsequently evaluate them with various OOD confidence-scoring functions, including MSP, Energy, MaxLogit, and GEN~\citep{10203747}. The results, presented in Table \ref{tab:hmdb_results_less}, indicate that ACR not only exhibits strong FD capabilities but also achieves robust OOD detection performance in comparison to AN.

\begin{table}[t!]
\centering
\resizebox{0.6\linewidth}{!}{
\begin{tabular}{l |  c c c}
\hline
\multicolumn{1}{c|}{} & AUROC$\uparrow$ & FPR95$\downarrow$ & ACC$\uparrow$ \\
\hline

MSP         & 94.14& 39.35& 86.20 \\

Mixup         & 95.35& 24.92& 84.72\\

A2D        & 94.33& 29.90& 86.66\\

CRL        & 94.14& 32.29& 85.75\\

\rowcolor{gray!20} 
ACR (ours) & \textbf{96.82}& \textbf{20.47}& \textbf{87.23}\\

\hline
\end{tabular}
}
\vspace{-0.2cm}
\caption{Performance under both OOD and misclassified samples.}
\vspace{-0.3cm}
\label{tab:FD-new}
\end{table}

\begin{table*}[t]
  \centering
  \scalebox{0.8}{
    \begin{tabular}{cccccccccccc}
    \toprule
    \multirow{3}[6]{*}{Methods} & \multicolumn{10}{c}{\textbf{OOD Datasets}}                                             & \multirow{3}[6]{*}{ID ACC $\uparrow$ } \\
\cmidrule{2-11}          & \multicolumn{2}{c}{\textbf{Kinetics-600}} & \multicolumn{2}{c}{\textbf{UCF101}} & \multicolumn{2}{c}{\textbf{EPIC-Kitchens}} & \multicolumn{2}{c}{\textbf{HAC}} & \multicolumn{2}{c}{Average} &  \\
\cmidrule{2-11}        & FPR95$\downarrow$ & AUROC$\uparrow$ & FPR95$\downarrow$ & AUROC$\uparrow$ & FPR95$\downarrow$ & AUROC$\uparrow$ & FPR95$\downarrow$ & AUROC$\uparrow$ & FPR95$\downarrow$ & AUROC$\uparrow$ &  \\
    \midrule

    MSP   &   39.11 &  88.78 &  46.64  & 86.40   &  17.33  & 95.99   & 39.91   &  89.10  &  35.75  & 90.07   &   87.23 \\

    ~~~~~~+AN   &  29.42&  90.73&  40.02&  88.08&  13.34&  96.43&  28.16&  91.63&  27.74&  91.72&  86.89    \\

    ~~~~~~+Ours   & 27.71 &   92.81 &   34.89 &   89.10 &   12.20 &   97.76 &   21.78 &   94.25 &\textbf{24.15}    &\textbf{93.48}    &   87.23     \\

    \midrule

    Energy   &  32.95  & 92.48  &   44.93 &   87.95 &  8.10  &  97.70  &  32.95  & 92.28   &  29.73  &   92.60 &  87.23  \\

    ~~~~~~+AN   & 24.52&  93.96&  36.49&  89.67&  6.96&  97.53&  22.92&  94.41&  22.72&  93.89&  86.89     \\

    ~~~~~~+Ours   &18.59   & 95.39 &31.58  &91.05  & 8.32 & 96.57 & 13.45 &96.07  &\textbf{17.99}  &\textbf{94.77}  &  87.23    \\

    \midrule
    
    MaxLogit   &  33.07  & 92.31  &   44.93 & 88.02   &    9.12  & 97.77 &  33.06  &  92.17  & 30.05   &  92.57  &  87.23  \\

    ~~~~~~+AN   &  24.86&  93.69&  36.60&  89.71&  6.96&  97.67&  22.92&  94.22&  22.84&  93.82&  86.89    \\

    ~~~~~~+Ours   & 19.16  &95.27  &31.93  &91.00  &8.21  & 97.75 & 14.82 &96.15  &\textbf{18.53}  &\textbf{95.04} &  87.23    \\

    \midrule
    
    GEN   &   41.51 & 90.34  &   46.18 &  87.91  & 8.21 &  98.26  & 38.31   &  91.28  & 33.55   & 91.95   &  87.23  \\

    ~~~~~~+AN   &   25.66&  93.50&  37.40&  91.19&  5.25&  98.98&  24.63&  94.28&  23.24&  94.49&  86.89   \\

    ~~~~~~+Ours   &  22.46 &95.17  &31.58  &92.19  &2.62  & 99.38 &15.17  & 96.62 &\textbf{17.96}  &\textbf{95.84}  &  87.23    \\

    \bottomrule
    \end{tabular}
}
\vspace{-0.2cm}
  \caption{{Multimodal OOD detection} using video and optical flow, with {HMDB51} as ID.}
\vspace{-0.5cm}
\label{tab:hmdb_results_less}
\end{table*}%

\noindent\textbf{FD in Presence of OOD Samples. } 
In this challenging setting, OOD samples are introduced
into the test data, and the model is required to distinguish both OOD samples and misclassified ID samples from correct predictions. We use HMDB51 as the ID dataset and incorporate OOD samples from the HAC dataset. As shown in \Cref{tab:FD-new}, our ACR demonstrates strong robustness in this scenario, accurately identifying both OOD and misclassified samples. It achieves average improvements of $1.47\%$ in AUROC, $4.45\%$ in FPR95, and $0.57\%$ in ACC compared to the strongest baseline.

\noindent\textbf{Compare with Other Feature-space Augmentation Methods.} To assess the effectiveness of 
\begin{table}[t]
\centering
\resizebox{0.8\linewidth}{!}{
\begin{tabular}{l | c c c c}
\hline
\multicolumn{1}{c|}{} & AURC$\downarrow$ & AUROC$\uparrow$ & FPR95$\downarrow$ & ACC$\uparrow$ \\
\hline

MSP   &  29.56  &  88.28   & 52.07  & \multicolumn{1}{c}{86.20} \\ 
Random Noise   &  24.86& 	90.82& 	42.86	    & \multicolumn{1}{c}{86.43} \\

 Random Drop  &  22.80 & 91.24    & 46.09      & \multicolumn{1}{c}{86.89} \\

 Feature Mixing  &   21.79&  91.33   & 42.11      & \multicolumn{1}{c}{87.00} \\

\rowcolor{gray!20} 
MFS  &  	\textbf{19.97}&	\textbf{92.02}	&\textbf{41.96}	   & \multicolumn{1}{c}{\textbf{87.23}} \\

\hline
\end{tabular}
}
\vspace{-0.2cm}
\caption{Compare with other feature space augmentation methods.}
\vspace{-0.3cm}
\label{tab:aug}
\end{table}

alternative augmentation strategies, we replace MFS with Random Noise (randomly replacing embedding values with noise), Random Drop (randomly replacing embedding values with zeros), and Feature Mixing~\citep{liu2025fm}. As shown in Table~\ref{tab:aug}, all baseline methods improve FD performance, highlighting the importance of outlier synthesis for regularization. However, MFS proves most effective, as it generates outliers of varying difficulty that better capture cross-modal inconsistency.


\noindent\textbf{Evaluation on More Modalities.} To further evaluate the robustness of our framework, we conduct 
\begin{table}[t]
\centering
\resizebox{0.75\linewidth}{!}{
\begin{tabular}{l | c c c c}
\hline
\multicolumn{1}{c|}{} & AURC$\downarrow$ & AUROC$\uparrow$ & FPR95$\downarrow$ & mIoU$\uparrow$ \\
\hline

MSP   & 33.90	 &79.97 &	55.49		    & \multicolumn{1}{c}{59.25} \\

\rowcolor{gray!20} 
ACR (ours)  &  	\textbf{21.90} &	\textbf{84.51}	 &\textbf{52.51}		   & \multicolumn{1}{c}{\textbf{63.56}} \\

\hline
\end{tabular}
}
\vspace{-0.2cm}
\caption{Results on SemanticKITTI for 3D semantic segmentation.}
\label{tab:FD-seg}
\vspace{-0.3cm}
\end{table}

experiments on the SemanticKITTI dataset~\citep{behley2019semantickitti}, using image and LiDAR point cloud modalities for the 3D semantic segmentation task. We adopt the fusion framework of~\cite{pmf} and adapt the evaluation to the FD setting. As shown in Table~\ref{tab:FD-seg}, our framework also achieves strong failure detection performance for the semantic segmentation task.






\noindent\textbf{Visualization.} To qualitatively assess confidence score distributions, we visualized them for correct and incorrect predictions on the HMDB51 dataset, as depicted in Figure \ref{fig:score}. The baseline MSP method exhibits a less distinct separation in confidence scores between correctly classified and misclassified samples. In contrast, our proposed solution assigns higher confidence scores to correct predictions and discernibly lower scores to incorrect ones. This leads to more clearly separated distributions, thereby enhancing the efficacy of failure detection.

\begin{figure}[t]
  \centering
  \includegraphics[width=0.75\linewidth]{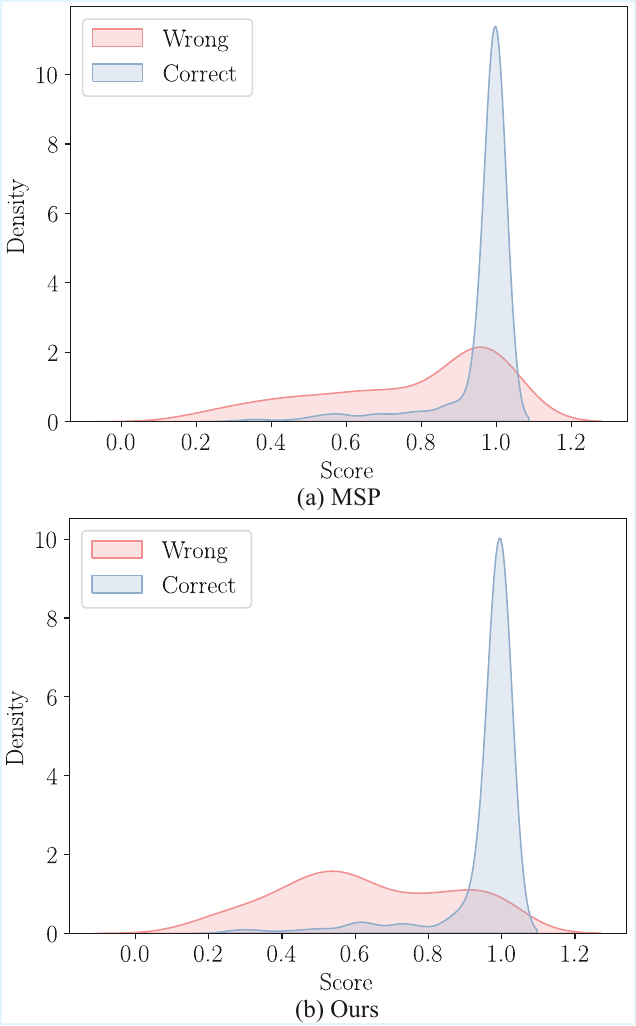}
  \vspace{-0.3cm}
  \caption{Score distribution for correct and wrong predictions. Our method leads to more clearly separated distributions, thereby enhancing the efficacy of failure detection.}
  \vspace{-0.3cm}
  \label{fig:score}
\end{figure}

\section{Conclusion}
In this work, we addressed the critical yet underexplored challenge of failure detection in multimodal systems, a vital component for ensuring reliability in safety-sensitive domains. We introduced ACR, the first dedicated framework designed to tackle this problem. By characterizing the confidence degradation phenomenon—where fused multimodal predictions exhibit lower confidence than their unimodal counterparts in most error cases—we introduced an Adaptive Confidence Loss that directly penalizes this behavior during training. Complementing this loss, our Multimodal Feature Swapping technique synthesizes challenging outliers, further enhancing the model’s ability to flag unreliable predictions.  Extensive evaluations across four diverse datasets and three modalities demonstrated ACR's superior performance and generalization capabilities, significantly outperforming existing baselines. By enabling more reliable identification of untrustworthy predictions, ACR represents a significant step towards enhancing the safety and trustworthiness of multimodal systems in real-world applications. 



\section*{Acknowledgments}

The authors acknowledge that this work was supported by the Bavarian Ministry for Economic Affairs, Regional Development and Energy as part of a project to support the thematic development of the Institute for Cognitive Systems.

{
    \small
    \bibliographystyle{ieeenat_fullname}
    \bibliography{main}

@inproceedings{venkateswara2017deep,
  title={Deep hashing network for unsupervised domain adaptation},
  author={Venkateswara, Hemanth and Eusebio, Jose and Chakraborty, Shayok and Panchanathan, Sethuraman},
  booktitle={Proceedings of the IEEE Conference on Computer Vision and Pattern Recognition},
  pages={5018--5027},
  year={2017}
}

@article{dong2025trustvlm,

    title={To Trust Or Not To Trust Your Vision-Language Model's Prediction},

    author={Dong, Hao and Liu, Moru and Liang, Jian and Chatzi, Eleni and Fink, Olga},

    journal={arXiv preprint arXiv:2505.23745},

    year={2025}

}

@inproceedings{sun2025unseen,

  title={Unseen Visual Anomaly Generation},

  author={Sun, Han and Cao, Yunkang and Dong, Hao and Fink, Olga},

  booktitle={Proceedings of the Computer Vision and Pattern Recognition Conference},

  pages={25508--25517},

  year={2025}

}

@article{fink2026physics,
  title={From Physics to Machine Learning and Back: Part I-Learning with Inductive Biases in Prognostics and Health Management},
  author={Fink, Olga and Sharma, Vinay and Nejjar, Ismail and Von Krannichfeldt, Leandro and Garmaev, Sergei and Zhang, Zepeng and Wei, Amaury and Frusque, Gaetan and Forest, Florent and Zhao, Mengjie and others},
  journal={Reliability Engineering \& System Safety},
  pages={112213},
  year={2026},
  publisher={Elsevier}
}

@inproceedings{rauker2023toward,
  title={Toward transparent ai: A survey on interpreting the inner structures of deep neural networks},
  author={R{\"a}uker, Tilman and Ho, Anson and Casper, Stephen and Hadfield-Menell, Dylan},
  booktitle={2023 ieee conference on secure and trustworthy machine learning (satml)},
  pages={464--483},
  year={2023},
  organization={IEEE}
}

@article{jain2022distilling,
  title={Distilling model failures as directions in latent space},
  author={Jain, Saachi and Lawrence, Hannah and Moitra, Ankur and Madry, Aleksander},
  journal={arXiv preprint arXiv:2206.14754},
  year={2022}
}

@article{narayanaswamy2022know,
  title={Know your space: Inlier and outlier construction for calibrating medical OOD detectors},
  author={Narayanaswamy, Vivek and Mubarka, Yamen and Anirudh, Rushil and Rajan, Deepta and Spanias, Andreas and Thiagarajan, Jayaraman J},
  journal={arXiv preprint arXiv:2207.05286},
  year={2022}
}

@inproceedings{subramanyam2024decider,
  title={Decider: Leveraging foundation model priors for improved model failure detection and explanation},
  author={Subramanyam, Rakshith and Thopalli, Kowshik and Narayanaswamy, Vivek and Thiagarajan, Jayaraman J},
  booktitle={ECCV},
  year={2024},
}

@inproceedings{nguyen2025interpretable,
  title={Interpretable failure detection with human-level concepts},
  author={Nguyen, Kien X and Li, Tang and Peng, Xi},
  booktitle={Proceedings of the AAAI Conference on Artificial Intelligence},
  volume={39},
  number={25},
  pages={26326--26334},
  year={2025}
}

@article{scarlett2019introductory,
  title={An introductory guide to Fano's inequality with applications in statistical estimation},
  author={Scarlett, Jonathan and Cevher, Volkan},
  journal={arXiv preprint arXiv:1901.00555},
  year={2019}
}

@article{beaudry2011intuitive,
  title={An intuitive proof of the data processing inequality},
  author={Beaudry, Normand J and Renner, Renato},
  journal={arXiv preprint arXiv:1107.0740},
  year={2011}
}

@inproceedings{qiu2022detecting,
  title={Detecting misclassification errors in neural networks with a gaussian process model},
  author={Qiu, Xin and Miikkulainen, Risto},
  booktitle={AAAI},
  year={2022}
}

@article{hendrycks2016baseline,
  title={A baseline for detecting misclassified and out-of-distribution examples in neural networks},
  author={Hendrycks, Dan and Gimpel, Kevin},
  journal={arXiv preprint arXiv:1610.02136},
  year={2016}
}

@inproceedings{zhu2022rethinking,
  title={Rethinking confidence calibration for failure prediction},
  author={Zhu, Fei and Cheng, Zhen and Zhang, Xu-Yao and Liu, Cheng-Lin},
  booktitle={ECCV},
  year={2022},
}

@article{zeng2025towards,
  title={Towards Efficient and General-Purpose Few-Shot Misclassification Detection for Vision-Language Models},
  author={Zeng, Fanhu and Cheng, Zhen and Zhu, Fei and Zhang, Xu-Yao},
  journal={arXiv preprint arXiv:2503.20492},
  year={2025}
}

@article{jaeger2022call,
  title={A call to reflect on evaluation practices for failure detection in image classification},
  author={Jaeger, Paul F and L{\"u}th, Carsten T and Klein, Lukas and Bungert, Till J},
  journal={arXiv preprint arXiv:2211.15259},
  year={2022}
}

@ARTICLE{1054406,
  author={Chow, C.},
  journal={IEEE Transactions on Information Theory}, 
  title={On optimum recognition error and reject tradeoff}, 
  year={1970},
  volume={16},
  number={1},
  pages={41-46},}

@inproceedings{granese2021doctor,
  title={Doctor: A simple method for detecting misclassification errors},
  author={Granese, Federica and Romanelli, Marco and Gorla, Daniele and Palamidessi, Catuscia and Piantanida, Pablo},
  booktitle={NeurIPS},
  year={2021}
}

@inproceedings{pinto2022using,
  title={Using mixup as a regularizer can surprisingly improve accuracy \& out-of-distribution robustness},
  author={Pinto, Francesco and Yang, Harry and Lim, Ser Nam and Torr, Philip and Dokania, Puneet},
  booktitle={NeurIPS},
  year={2022}
}

@article{zhang2017mixup,
  title={mixup: Beyond empirical risk minimization},
  author={Zhang, Hongyi and Cisse, Moustapha and Dauphin, Yann N and Lopez-Paz, David},
  journal={arXiv preprint arXiv:1710.09412},
  year={2017}
}

@article{han2024unveiling,
  title={Unveiling AI's Blind Spots: An Oracle for In-Domain, Out-of-Domain, and Adversarial Errors},
  author={Han, Shuangpeng and Zhang, Mengmi},
  journal={arXiv preprint arXiv:2410.02384},
  year={2024}
}

@inproceedings{ConfidNet,
  title={Addressing failure prediction by learning model confidence},
  author={Corbi{\`e}re, Charles and Thome, Nicolas and Bar-Hen, Avner and Cord, Matthieu and P{\'e}rez, Patrick},
  booktitle={NeurIPS},
  year={2019}
}

@inproceedings{jiang2018trust,
  title={To trust or not to trust a classifier},
  author={Jiang, Heinrich and Kim, Been and Guan, Melody and Gupta, Maya},
  booktitle={NeurIPS},
  year={2018}
}

@inproceedings{moon2020confidence,
  title={Confidence-aware learning for deep neural networks},
  author={Moon, Jooyoung and Kim, Jihyo and Shin, Younghak and Hwang, Sangheum},
  booktitle={ICML},
  year={2020},
}

@article{soomro2012ucf101,
  title={UCF101: A dataset of 101 human actions classes from videos in the wild},
  author={Soomro, Khurram and Zamir, Amir Roshan and Shah, Mubarak},
  journal={arXiv preprint arXiv:1212.0402},
  year={2012}
}

@inproceedings{SSL,
    title = {Learning to Predict Trustworthiness with Steep Slope Loss},
    booktitle = {NeurIPS},
    author = {Luo, Yan and Wong, Yongkang and Kankanhalli, Mohan S and Zhao, Qi},
    year = {2021},
}

@inproceedings{selvaraju2017grad,
  title={Grad-cam: Visual explanations from deep networks via gradient-based localization},
  author={Selvaraju, Ramprasaath R and Cogswell, Michael and Das, Abhishek and Vedantam, Ramakrishna and Parikh, Devi and Batra, Dhruv},
  booktitle={Proceedings of the IEEE international conference on computer vision},
  pages={618--626},
  year={2017}
}

@InProceedings{pmlr-v80-kim18d,
  title = 	 {Interpretability Beyond Feature Attribution: Quantitative Testing with Concept Activation Vectors ({TCAV})},
  author =       {Kim, Been and Wattenberg, Martin and Gilmer, Justin and Cai, Carrie and Wexler, James and Viegas, Fernanda and sayres, Rory},
  booktitle = 	 {Proceedings of the 35th International Conference on Machine Learning},
  pages = 	 {2668--2677},
  year = 	 {2018},
  editor = 	 {Dy, Jennifer and Krause, Andreas},
  volume = 	 {80},
  series = 	 {Proceedings of Machine Learning Research},
  month = 	 {10--15 Jul},
  publisher =    {PMLR},
}

@article{adebayo2018sanity,
  title={Sanity checks for saliency maps},
  author={Adebayo, Julius and Gilmer, Justin and Muelly, Michael and Goodfellow, Ian and Hardt, Moritz and Kim, Been},
  journal={Advances in neural information processing systems},
  volume={31},
  year={2018}
}

@inproceedings{wu2022characterizing,
  title={Characterizing and overcoming the greedy nature of learning in multi-modal deep neural networks},
  author={Wu, Nan and Jastrzebski, Stanislaw and Cho, Kyunghyun and Geras, Krzysztof J},
  booktitle={ICML},
  year={2022},
}

@article{liu2024typicalness,
  title={Typicalness-Aware Learning for Failure Detection},
  author={Liu, Yijun and Cui, Jiequan and Tian, Zhuotao and Yang, Senqiao and He, Qingdong and Wang, Xiaoling and Su, Jingyong},
  journal={arXiv preprint arXiv:2411.01981},
  year={2024}
}

@inproceedings{li2024sure,
  title={SURE: SUrvey REcipes for building reliable and robust deep networks},
  author={Li, Yuting and Chen, Yingyi and Yu, Xuanlong and Chen, Dexiong and Shen, Xi},
  booktitle={CVPR},
  year={2024}
}

@article{liu2025fm,
  title={Extremely Simple Multimodal Outlier Synthesis for Out-of-Distribution Detection and Segmentation},
  author={Liu, Moru and Dong, Hao and Kelly, Jessica and Fink, Olga and Trapp, Mario},
  journal={arXiv preprint arXiv:2505.16985},
  year={2025}
}

@article{cheng2024breaking,
  title={Breaking the limits of reliable prediction via generated data},
  author={Cheng, Zhen and Zhu, Fei and Zhang, Xu-Yao and Liu, Cheng-Lin},
  journal={IJCV},
  year={2024},
}

@INPROCEEDINGS{OpenMix,
  author={Zhu, Fei and Cheng, Zhen and Zhang, Xu-Yao and Liu, Cheng-Lin},
  booktitle={CVPR}, 
  title={OpenMix: Exploring Outlier Samples for Misclassification Detection}, 
  year={2023},}

@article{dong2025mmdasurvey,

	author   = {Dong, Hao and Liu, Moru and Zhou, Kaiyang and Chatzi, Eleni and Kannala, Juho and Stachniss, Cyrill and Fink, Olga},

	title    = {Advances in Multimodal Adaptation and Generalization: From Traditional Approaches to Foundation Models},

	journal  = {arXiv preprint arXiv:2501.18592},

	year     = {2025},

}

@article{feng2022towards,
  title={Towards better selective classification},
  author={Feng, Leo and Ahmed, Mohamed Osama and Hajimirsadeghi, Hossein and Abdi, Amir},
  journal={arXiv preprint arXiv:2206.09034},
  year={2022}
}

@inproceedings{liu2020energy,
  title={Energy-based out-of-distribution detection},
  author={Liu, Weitang and Wang, Xiaoyun and Owens, John and Li, Yixuan},
  booktitle={NeurIPS},
  year={2020}
}

@inproceedings{he2016deep,
  title={Deep residual learning for image recognition},
  author={He, Kaiming and Zhang, Xiangyu and Ren, Shaoqing and Sun, Jian},
  booktitle={CVPR},
  pages={770--778},
  year={2016}
}

@INPROCEEDINGS{9053174,
  author={Chen, Honglie and Xie, Weidi and Vedaldi, Andrea and Zisserman, Andrew},
  booktitle={ICASSP}, 
  title={Vggsound: A Large-Scale Audio-Visual Dataset},
year = {2020}}

@InProceedings{Feichtenhofer_2019_ICCV,
author = {Feichtenhofer, Christoph and Fan, Haoqi and Malik, Jitendra and He, Kaiming},
title = {SlowFast Networks for Video Recognition},
booktitle = {ICCV},
year = {2019}
}

@article{liang2017enhancing,
  title={Enhancing the reliability of out-of-distribution image detection in neural networks},
  author={Liang, Shiyu and Li, Yixuan and Srikant, Rayadurgam},
  journal={arXiv preprint arXiv:1706.02690},
  year={2017}
}

@inproceedings{sun2022out,
  title={Out-of-distribution detection with deep nearest neighbors},
  author={Sun, Yiyou and Ming, Yifei and Zhu, Xiaojin and Li, Yixuan},
  booktitle={ICML},
  year={2022},
}

@inproceedings{carreira2017quo,
  title={Quo vadis, action recognition? a new model and the kinetics dataset},
  author={Carreira, Joao and Zisserman, Andrew},
  booktitle={CVPR},
  year={2017}
}

@inproceedings{wang2016temporal,
  title={Temporal segment networks: Towards good practices for deep action recognition},
  author={Wang, Limin and Xiong, Yuanjun and Wang, Zhe and Qiao, Yu and Lin, Dahua and Tang, Xiaoou and Van Gool, Luc},
  booktitle={ECCV},
  year={2016},
}

@inproceedings{wang2018tienet,
  title={Tienet: Text-image embedding network for common thorax disease classification and reporting in chest x-rays},
  author={Wang, Xiaosong and Peng, Yifan and Lu, Le and Lu, Zhiyong and Summers, Ronald M},
  booktitle={CVPR},
  year={2018}
}

@article{feng2020deep,
  title={Deep multi-modal object detection and semantic segmentation for autonomous driving: Datasets, methods, and challenges},
  author={Feng, Di and Haase-Sch{\"u}tz, Christian and Rosenbaum, Lars and Hertlein, Heinz and Glaeser, Claudius and Timm, Fabian and Wiesbeck, Werner and Dietmayer, Klaus},
  journal={IEEE Transactions on Intelligent Transportation Systems},
  volume={22},
  number={3},
  pages={1341--1360},
  year={2020},
  publisher={IEEE}
}

@inproceedings{lee2018simple,
  title={A simple unified framework for detecting out-of-distribution samples and adversarial attacks},
  author={Lee, Kimin and Lee, Kibok and Lee, Honglak and Shin, Jinwoo},
  booktitle={NeurIPS},
  year={2018}
}

@inproceedings{zhang2023mixture,
  title={Mixture outlier exposure: Towards out-of-distribution detection in fine-grained environments},
  author={Zhang, Jingyang and Inkawhich, Nathan and Linderman, Randolph and Chen, Yiran and Li, Hai},
  booktitle={Proceedings of the IEEE/CVF Winter Conference on Applications of Computer Vision},
  pages={5531--5540},
  year={2023}
}

@article{hendrycks2018deep,
  title={Deep anomaly detection with outlier exposure},
  author={Hendrycks, Dan and Mazeika, Mantas and Dietterich, Thomas},
  journal={arXiv preprint arXiv:1812.04606},
  year={2018}
}

@inproceedings{yu2019unsupervised,
  title={Unsupervised out-of-distribution detection by maximum classifier discrepancy},
  author={Yu, Qing and Aizawa, Kiyoharu},
  booktitle={ICCV},
  year={2019}
}

@inproceedings{yang2021semantically,
  title={Semantically coherent out-of-distribution detection},
  author={Yang, Jingkang and Wang, Haoqi and Feng, Litong and Yan, Xiaopeng and Zheng, Huabin and Zhang, Wayne and Liu, Ziwei},
  booktitle={ICCV},
  year={2021}
}

@article{rasenberg2020alignment,
  title={Alignment in multimodal interaction: An integrative framework},
  author={Rasenberg, Marlou and {\"O}zy{\"u}rek, Asli and Dingemanse, Mark},
  journal={Cognitive science},
  volume={44},
  number={11},
  year={2020},
}

@inproceedings{simonyan2014two,
  title={Two-stream convolutional networks for action recognition in videos},
  author={Simonyan, Karen and Zisserman, Andrew},
  booktitle={NeurIPS},
  year={2014}
}

@inproceedings{behley2019semantickitti,
  title={Semantickitti: A dataset for semantic scene understanding of lidar sequences},
  author={Behley, Jens and Garbade, Martin and Milioto, Andres and Quenzel, Jan and Behnke, Sven and Stachniss, Cyrill and Gall, Jurgen},
  booktitle={ICCV},
  year={2019}
}

@inproceedings{pmf,
  title={Perception-aware multi-sensor fusion for 3d lidar semantic segmentation},
  author={Zhuang, Zhuangwei and Li, Rong and Jia, Kui and Wang, Qicheng and Li, Yuanqing and Tan, Mingkui},
  booktitle={ICCV},
  year={2021}
}

@INPROCEEDINGS{10203747,
  author={Liu, Xixi and Lochman, Yaroslava and Zach, Christopher},
  booktitle={CVPR}, 
  title={GEN: Pushing the Limits of Softmax-Based Out-of-Distribution Detection}, 
  year={2023},}

@article{yang2021oodsurvey,
    title={Generalized Out-of-Distribution Detection: A Survey},
    author={Yang, Jingkang and Zhou, Kaiyang and Li, Yixuan and Liu, Ziwei},
    journal={arXiv preprint arXiv:2110.11334},
    year={2021}
}

@article{zhang2023openood,
  title={OpenOOD v1.5: Enhanced Benchmark for Out-of-Distribution Detection},
  author={Zhang, Jingyang and Yang, Jingkang and Wang, Pengyun and Wang, Haoqi and Lin, Yueqian and Zhang, Haoran and Sun, Yiyou and Du, Xuefeng and Li, Yixuan and Liu, Ziwei and Chen, Yiran and Li, Hai},
  journal={arXiv preprint arXiv:2306.09301},
  year={2023}
}

@misc{2020mmaction2,
    title={OpenMMLab's Next Generation Video Understanding Toolbox and Benchmark},
    author={MMAction2 Contributors},
    howpublished = {\url{https://github.com/open-mmlab/mmaction2}},
    year={2020}
}

@inproceedings{chan2021entropy,
  title={Entropy maximization and meta classification for out-of-distribution detection in semantic segmentation},
  author={Chan, Robin and Rottmann, Matthias and Gottschalk, Hanno},
  booktitle={ICCV},
  year={2021}
}

@inproceedings{Adam,
    author = {Diederik P Kingma and Jimmy Ba},
    title   = {Adam: A method for stochastic optimization},
    booktitle = {ICLR},
    year    = {2015}
}

@inproceedings{dong2024multiood,
  title={Multiood: Scaling out-of-distribution detection for multiple modalities},
  author={Dong, Hao and Zhao, Yue and Chatzi, Eleni and Fink, Olga},
  booktitle={NeurIPS},
  year={2024}
}

@article{hendrycks2019scaling,
  title={Scaling out-of-distribution detection for real-world settings},
  author={Hendrycks, Dan and Basart, Steven and Mazeika, Mantas and Zou, Andy and Kwon, Joe and Mostajabi, Mohammadreza and Steinhardt, Jacob and Song, Dawn},
  journal={arXiv preprint arXiv:1911.11132},
  year={2019}
}

@article{li2024dpu,
  title={Dpu: Dynamic prototype updating for multimodal out-of-distribution detection},
  author={Li, Shawn and Gong, Huixian and Dong, Hao and Yang, Tiankai and Tu, Zhengzhong and Zhao, Yue},
  journal={arXiv preprint arXiv:2411.08227},
  year={2024}
}

@INPROCEEDINGS{9578249,
  author={Di Biase, Giancarlo and Blum, Hermann and Siegwart, Roland and Cadena, Cesar},
  booktitle={CVPR}, 
  title={Pixel-wise Anomaly Detection in Complex Driving Scenes}, 
  year={2021},}

@inproceedings{zhou2022rethinking,
  title={Rethinking reconstruction autoencoder-based out-of-distribution detection},
  author={Zhou, Yibo},
  booktitle={CVPR},
  year={2022}
}

@inproceedings{tian2022pixel,
  title={Pixel-wise energy-biased abstention learning for anomaly segmentation on complex urban driving scenes},
  author={Tian, Yu and Liu, Yuyuan and Pang, Guansong and Liu, Fengbei and Chen, Yuanhong and Carneiro, Gustavo},
  booktitle={ECCV},
  year={2022},
}

@InProceedings{pmlr-v70-guo17a,
  title = 	 {On Calibration of Modern Neural Networks},
  author =       {Chuan Guo and Geoff Pleiss and Yu Sun and Kilian Q. Weinberger},
  booktitle = 	 {ICML},
  year = 	 {2017},
}

@inproceedings{lakshminarayanan2017simple,
  title={Simple and scalable predictive uncertainty estimation using deep ensembles},
  author={Lakshminarayanan, Balaji and Pritzel, Alexander and Blundell, Charles},
  booktitle={NeurIPS},
  year={2017}
}

@inproceedings{kuehne2011hmdb,
  title={HMDB: a large video database for human motion recognition},
  author={Kuehne, Hildegard and Jhuang, Hueihan and Garrote, Est{\'\i}baliz and Poggio, Tomaso and Serre, Thomas},
  booktitle={ICCV},
  year={2011},
}

@article{kay2017kinetics,
  title={The kinetics human action video dataset},
  author={Kay, Will and Carreira, Joao and Simonyan, Karen and Zhang, Brian and Hillier, Chloe and Vijayanarasimhan, Sudheendra and Viola, Fabio and Green, Tim and Back, Trevor and Natsev, Paul and others},
  journal={arXiv preprint arXiv:1705.06950},
  year={2017}
}

@inproceedings{dong2023SimMMDG,

    title={Sim{MMDG}: A Simple and Effective Framework for Multi-modal Domain Generalization},

    author={Dong, Hao and Nejjar, Ismail and Sun, Han and Chatzi, Eleni and Fink, Olga},

    booktitle={NeurIPS},

    year={2023}

}

@INPROCEEDINGS{Damen2018EPICKITCHENS,
   title={Scaling Egocentric Vision: The EPIC-KITCHENS Dataset},
   author={Damen, Dima and Doughty, Hazel and Farinella, Giovanni Maria  and Fidler, Sanja and 
           Furnari, Antonino and Kazakos, Evangelos and Moltisanti, Davide and Munro, Jonathan 
           and Perrett, Toby and Price, Will and Wray, Michael},
   booktitle={ECCV},
  year={2018}
}
}

\clearpage
\newpage
\appendix

\newpage


\section{Theoretical Analysis on Confidence Degradation}

\begin{itemize}
    \item Let $Y$ be the discrete label variable taking values in $\{1, \dots, K\}$.
    \item Let $X_S$ denote the collection of modality inputs whose indices lie in $S \subseteq \{1, \dots, M\}$. In particular $X_M$ is the full multimodal input and $X_T$ with $T \subset S$ denotes a subset (less information).
    \item The conditional entropy of $Y$ on $X$ is $H(Y \mid X)$.
\end{itemize}

\begin{restatable}{theorem}{low}
\label{thm:low}
Adding more modalities cannot increase the conditional entropy:
$$H(Y \mid X_S) \le H(Y \mid X_T), \quad \text{for } T \subset S.$$
\end{restatable}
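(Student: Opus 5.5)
The plan is to reduce the claim to the standard fact that conditioning does not increase entropy, equivalently that conditional mutual information is nonnegative. Since $T \subset S$, the collection $X_S$ can be written as the pair $(X_T, X_{S\setminus T})$, so $H(Y \mid X_S) = H(Y \mid X_T, X_{S\setminus T})$. The inequality then becomes
\[
H(Y \mid X_T) - H(Y \mid X_T, X_{S\setminus T}) = I(Y ; X_{S\setminus T} \mid X_T) \ge 0,
\]
which is exactly what must be shown. An equivalent route is the chain rule for mutual information, $I(Y;X_S) = I(Y;X_T) + I(Y;X_{S\setminus T}\mid X_T)$, combined with $H(Y\mid X_S) = H(Y) - I(Y;X_S)$.

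The key steps, in order, are as follows.

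\textbf{Step 1.} Record the decomposition $X_S = (X_T, X_{S\setminus T})$ and the identity above. This follows from the chain rule $H(Y, X_{S\setminus T}\mid X_T) = H(X_{S\setminus T}\mid X_T) + H(Y\mid X_T, X_{S\setminus T})$ together with its symmetric version.

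\textbf{Step 2.} Prove $I(Y; X_{S\setminus T}\mid X_T)\ge 0$. Write it as an expectation over $X_T = x_T$ of the KL divergence between $P(Y, X_{S\setminus T}\mid x_T)$ and $P(Y\mid x_T)P(X_{S\setminus T}\mid x_T)$. Each such divergence is nonnegative by Gibbs' inequality, which is Jensen's inequality applied to the concave $\log$. Averaging preserves nonnegativity.

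\textbf{Step 3.} Rearrange to obtain $H(Y\mid X_S)\le H(Y\mid X_T)$.

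The main obstacle is measure-theoretic rather than conceptual. $Y$ is discrete, but the modality inputs are generally continuous (video, audio, flow). One therefore has to make sure $H(Y\mid X)$ is read as $\mathbb{E}_X[H(Y\mid X=x)]$, built from regular conditional distributions, and not as a differential entropy. I would handle this by defining $H(Y\mid X) := \mathbb{E}_X\!\left[-\sum_y P(y\mid X)\log P(y\mid X)\right]$, which is finite and bounded by $\log K$. I would then prove Step 2 directly as a concavity statement. The map $p\mapsto -\sum_y p_y\log p_y$ is concave on the simplex, and $P(Y\mid X_T) = \mathbb{E}[P(Y\mid X_S)\mid X_T]$ by the tower property. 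Conditional Jensen's inequality then gives $H(Y\mid X_T = x_T) \ge \mathbb{E}[H(Y\mid X_S)\mid X_T=x_T]$, and taking the outer expectation finishes the proof. This avoids densities for the inputs entirely.
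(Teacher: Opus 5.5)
Your proposal is correct. The paper's proof is two lines. It invokes the data processing inequality to get $I(Y;X_T)\le I(Y;X_S)$, implicitly using that $X_T$ is a sub-collection of $X_S$, so $Y \to X_S \to X_T$ is a Markov chain. It then subtracts from the fixed $H(Y)$ via $I(Y;X)=H(Y)-H(Y\mid X)$.

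Your Steps 1--3, and especially your alternative via the chain rule $I(Y;X_S)=I(Y;X_T)+I(Y;X_{S\setminus T}\mid X_T)$, are exactly the standard proof of that instance of the data processing inequality. In substance you are unpacking the black box the paper cites rather than taking a different road. Along the way you make explicit the Markov-chain structure that the paper leaves unstated.

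Your final concavity argument is genuinely different. You write $H(Y\mid X)=\mathbb{E}[\phi(P(Y\mid X))]$, with $\phi$ the Shannon entropy on the simplex. You then use $P(Y\mid X_T)=\mathbb{E}[P(Y\mid X_S)\mid X_T]$, which is valid since $\sigma(X_T)\subseteq\sigma(X_S)$, together with conditional Jensen. This route never needs mutual information, densities, or differential entropies of the continuous modality inputs, and every quantity stays bounded by $\log K$.

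That buys rigor in exactly the setting the paper targets (video, audio, optical flow), which the paper's proof passes over silently. The paper's route is shorter in exchange. It also frames the result as ``information about $Y$ can only grow with more modalities,'' which is the reading the paper relies on in the discussion that follows.
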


\noindent\textbf{Proof.} If $T \subset S$, then by the data processing inequality~\cite{beaudry2011intuitive}, we have $I(Y; X_T) \le I(Y; X_S)$, meaning that adding modalities cannot decrease the mutual information between the input and the label.
Since $I(Y; X) = H(Y) - H(Y \mid X)$ and $H(Y)$ is fixed, it follows directly that $ H(Y \mid X_S) \le H(Y \mid X_T). \quad$ $\blacksquare$

From Theorem 1, we know that when all modalities are predictive of the target, adding additional input information (i.e., more modalities) cannot increase the conditional entropy. Consequently, the model’s uncertainty should not increase as more modalities are provided. Confidence degradation, where predictions become less confident after adding modalities, therefore reflects an undesirable and theoretically inconsistent behavior in multimodal learning.

\begin{restatable}{theorem}{low2}
\label{thm:low2}
Confidence degradation is linked to a higher theoretical bound on prediction error.
\end{restatable}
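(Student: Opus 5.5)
The plan is to make the informal statement precise through Fano's inequality, since the bibliography already points to it~\cite{scarlett2019introductory}. Fano's inequality lower-bounds the error probability $P_e = \Pr(\hat Y \neq Y)$ of any estimator $\hat Y$ computed from an input $X$ via
\[
H(Y \mid X) \le h_b(P_e) + P_e \log(K-1),
\]
where $h_b$ is the binary entropy. The right-hand side is increasing in $P_e$ on $[0, 1-1/K]$. Inverting it therefore gives a lower bound $P_e \ge \phi(H(Y\mid X))$ with $\phi$ nondecreasing, and in its weaker linear form $P_e \ge \frac{H(Y\mid X) - 1}{\log K}$. So any quantity standing in for the conditional entropy controls a bound on the prediction error, and a larger value of that quantity means a larger bound.

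Next I connect confidence to entropy at the level of the model's predictive distribution. I treat the fused output $\hat p$ as the model's estimate of $p(Y\mid X_M)$, and $\hat p^k$ as its estimate of $p(Y\mid X_{\{k\}})$. For a categorical distribution with maximum probability $c$, the entropy satisfies $H \ge -\log c$ (the min-entropy bound). Conversely, for fixed $c$ the entropy is at most $h_b(c) + (1-c)\log(K-1)$, which is decreasing in $c$ for $c \ge 1/K$. Both bounds are monotone in $c$. A strictly lower fused confidence $\max_y \hat p < \max_y \hat p^k$ therefore means the fused prediction is certified to carry at least as much uncertainty as a distribution with the smaller maximum. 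This gives a larger admissible value of the estimated $\hat H(Y\mid X_M)$ than of $\hat H(Y\mid X_{\{k\}})$.

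I then chain the pieces. Under confidence degradation (Definition~\ref{def:confidence-degradation}), the model's effective conditional entropy for the full input is pushed above the unimodal level. Theorem~\ref{thm:low} says the true quantity satisfies $H(Y\mid X_M) \le H(Y\mid X_{\{k\}})$, so this ordering is reversed. Plugging into Fano's inequality, the lower bound on the error of the fused predictor is now at least the unimodal bound rather than at most it. This is the sense in which degradation is ``linked to a higher theoretical bound on prediction error.''

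The main obstacle is the gap between the model's confidence and the true conditional entropy. Fano's inequality involves $H(Y\mid X)$ averaged over the data distribution, whereas degradation is a per-sample statement about $\hat p$. I would handle this in two steps. First, I would assume calibration, meaning $\hat p$ equals the true posterior, or else work with the model-induced joint distribution in which the predictive entropy plays the role of $H(Y\mid X)$. Second, I would state the result pointwise via the conditional Fano bound $\Pr(\hat Y\neq Y \mid X=x) \ge 1 - \max_y p(y\mid x)$, which is actually exact for the Bayes classifier. With this version the per-sample claim becomes immediate: lower fused confidence gives a higher per-sample error lower bound. Averaging then yields the Fano form. I would present the result as a qualitative bound under the calibration assumption, without claiming a sharp equivalence.
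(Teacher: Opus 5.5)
Your approach is the paper's. Its proof cites Fano's inequality in the linear form $P_e \ge (H(Y\mid X)-1)/\log_2|\mathcal{Y}|$ and then asserts that lower fused confidence corresponds to a larger $H(Y\mid X)$, hence a larger error floor. It uses no calibration assumption and does not address the per-sample versus averaged mismatch, so your version is already more careful than the original.

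One step fails as written: the move from your two monotone bounds to ``the model's effective conditional entropy for the full input is pushed above the unimodal level.'' Shannon entropy is not monotone in the maximum probability. For $K=3$, $(0.5,0.5,0)$ has confidence $0.5$ and entropy $1$ bit, while $(0.6,0.2,0.2)$ has confidence $0.6$ and entropy about $1.37$ bits. So a degraded fused output can have \emph{lower} entropy than the unimodal one. Your bounds only shift the admissible interval $[-\log c,\ h_b(c)+(1-c)\log(K-1)]$; they do not order the entropies. The paper makes the same leap.

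Your own material gives two correct replacements:
\begin{itemize}
\item Compare the certified floors $\phi(-\log \textit{conf})$ and $\phi(-\log \textit{conf}_k)$ rather than the entropies. These are monotone in confidence, since $\phi$ is nondecreasing and Fano holds conditionally on $X=x$.
\item More directly and more tightly, use the bound from your last paragraph, $\Pr(\hat Y\neq Y\mid X=x)\ge 1-\max_y p(y\mid x)$. This is the Bayes-error identity rather than Fano. Under calibration it turns degradation immediately into a higher per-sample error floor for the fused predictor.
\end{itemize}

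Also drop the appeal to Theorem~\ref{thm:low}. It concerns averaged conditional entropies, and conditioning reduces entropy only on average. A per-sample reversal is therefore not in tension with it, and the statement does not need it.
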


\noindent\textbf{Proof.} Let $P_e = Pr(Y \neq \hat{Y})$ denote the probability of prediction error. By Fano's inequality~\cite{scarlett2019introductory}, the error rate is bounded by the true conditional entropy:
$$ P_e \ge \frac{H(Y|X) - 1}{\log_2(|Y|)}. $$
Confidence degradation refers to the phenomenon where the fused multimodal prediction has \emph{lower} confidence than at least one of its unimodal predictions. Lower predictive confidence implies greater uncertainty about the output distribution, which corresponds to an increase in $H(Y|X)$.  According to Fano's inequality, a higher conditional entropy raises the minimum theoretical bound on error. Thus, while not a direct cause, confidence degradation reflects a state where the model's perceived uncertainty is higher, which is strongly associated with a higher potential for misclassification. $\quad \blacksquare$

Theorem 2 formalizes the key phenomenon we observe in practice: failures in multimodal systems frequently occur together with confidence degradation.

\section{Broader Impact, Limitations, and Future Work}

\noindent\textbf{Broader Impact.} The ACR framework presented in this work has the potential to generate a significant positive societal impact. As AI systems, especially multimodal ones, are increasingly integrated into safety-critical applications such as autonomous driving, medical diagnosis, and industrial robotics, the ability to reliably detect potential failures (i.e., misclassifications) is paramount. ACR directly contributes to enhancing the safety and trustworthiness of these systems by providing a dedicated mechanism to identify and flag uncertain predictions before they can lead to adverse outcomes. This can foster greater public trust and accelerate the responsible adoption of AI technologies in areas where reliability is non-negotiable. Furthermore, by improving the understanding of failure modes in multimodal models, such as the identified confidence degradation phenomenon, our work can guide the development of more robust and dependable AI systems across various domains, ultimately leading to safer and more effective human-AI collaboration. 

\noindent\textbf{Limitations.}  While ACR demonstrates strong gains across multiple datasets and modalities, it has several limitations. First, while we demonstrate robustness under certain distribution shifts and OOD scenarios, the behavior of ACR against sophisticated adversarial attacks specifically designed to fool the misclassification detector itself remains an open area. 
Second, our current study primarily focuses on specific types of modalities (e.g., video, optical flow, and audio). The generalization of ACR to a very broad range of disparate modalities (e.g., text with medical imaging, or sensor data with acoustic signals) without specific adaptations warrants further investigation.

\noindent\textbf{Future Work.} Building upon the contributions of this paper, several avenues for future research present themselves.
First, we plan to explore the integration of ACR with online learning and continual learning paradigms. This would allow the misclassification detector to adapt dynamically to evolving data distributions and novel failure modes encountered during real-world deployment, which is crucial for long-term operational reliability.
Second, extending and evaluating ACR across a wider spectrum of modalities (e.g., language, audio, tabular data) and a more diverse set of complex, safety-critical tasks (e.g., real-time robotic interaction) is a key direction.
Finally, investigating the robustness of ACR against targeted adversarial attacks and developing defense mechanisms will be crucial for deployment in adversarial settings. 

\section{Related Work}
\subsection{Failure Detection}
{Foundational work on failure prediction originates from selective
classification, which formalizes the trade-off between prediction accuracy and
abstention under uncertainty~\citep{1054406}. In the deep learning era, this
concept has re-emerged as \emph{failure detection} (FD), where the goal is to
identify test samples on which a model is likely to be incorrect~\citep{qiu2022detecting,dong2025trustvlm}.
Thresholding the maximum softmax probability (MSP)~\citep{hendrycks2016baseline}
remains a strong baseline, yet its vulnerability to overconfidence limits its
reliability~\citep{pinto2022using,qiu2022detecting}.
Beyond MSP, a large body of work aims to more effectively estimate prediction
risk.}


One stream of research focuses on training auxiliary modules, such as heads or separate networks, to explicitly predict correctness based on intermediate features or model logits~\citep{ConfidNet,jiang2018trust,granese2021doctor}. For instance, ConfidNet~\citep{ConfidNet} introduces a dedicated confidence estimation head that operates on penultimate-layer features. 
A distinct line of research adopts an integrated approach, jointly optimizing FD functionalities with the primary model's training objective. Representative techniques in this category include improving the separability of correct and incorrect representations~\citep{SSL}, enhancing confidence ranking~\citep{moon2020confidence}, regularizing based on sample typicality~\citep{liu2024typicalness}, and enforcing flatter loss landscapes around misclassified examples~\citep{zhu2022rethinking}. Concurrently, data augmentation strategies have proven effective, primarily by exposing the model to synthesized failure cases during training~\citep{OpenMix, li2024sure, cheng2024breaking, han2024unveiling}. However, all previous approaches were designed for unimodal scenarios, without accounting for the interaction and complementary nature of diverse modalities.

\subsection{Out-of-Distribution Detection}

OOD detection shares a similar objective with FD but addresses fundamentally different challenges~\cite{narayanaswamy2022know}. Specifically, OOD detection aims to identify test samples that exhibit semantic shifts from the training distribution, typically without compromising in-distribution (ID) classification accuracy. 
OOD detection has been extensively investigated in recent years~\citep{yang2021oodsurvey, zhang2023openood}, encompassing diverse approaches such as post-hoc scoring~\citep{hendrycks2016baseline, liang2017enhancing, liu2020energy}, feature-based techniques~\citep{lee2018simple, sun2022out}, outlier exposure~\citep{hendrycks2018deep, yu2019unsupervised, yang2021semantically}, and reconstruction-based methods~\citep{9578249, zhou2022rethinking}. Multimodal OOD detection~\citep{dong2024multiood, li2024dpu} is also an emerging research area. 
Although OOD detection methods are often employed as baselines for FD, recent studies~\citep{zhu2022rethinking,jaeger2022call,OpenMix} demonstrate that techniques optimized for OOD detection generally exhibit suboptimal performance on FD tasks. This finding underscores the necessity for developing specialized FD approaches.

\subsection{{Interpretability-Based Failure Prediction}}

A complementary line of work investigates predicting failures by analyzing a model’s internal reasoning rather than relying solely on output confidence~\cite{subramanyam2024decider,jain2022distilling}. Recent methods~\citep{rauker2023toward,nguyen2025interpretable} leverage post-hoc interpretability techniques (e.g., Grad-CAM~\cite{selvaraju2017grad}, saliency~\cite{adebayo2018sanity}, concept activation~\cite{pmlr-v80-kim18d}) and train auxiliary meta-predictors to detect unreliable explanations. These approaches identify patterns such as noisy attention, background-biased focus, and lack of class-discriminative localization, which often correlate with misclassifications. While informative, interpretability-driven failure prediction requires generating explanations for every test sample, incurring significant computational overhead, and remains decoupled from the model’s training dynamics. In contrast, ACR embeds failure awareness directly into training and enables failure prediction in a single forward pass, eliminating the cost and complexity associated with post-hoc explanations.





\section{Further Details on Datasets}

Our framework is primarily evaluated on four action recognition datasets from the MultiOOD benchmark~\citep{dong2024multiood}: HMDB51~\citep{kuehne2011hmdb}, EPIC-Kitchens~\citep{Damen2018EPICKITCHENS}, HAC~\citep{dong2023SimMMDG}, and Kinetics-600~\citep{kay2017kinetics}. For evaluating multimodal Out-of-Distribution (OOD) detection in ablation studies, we additionally employ the UCF101 dataset~\citep{simonyan2014two}, also from the MultiOOD benchmark. \cref{fig:dataset} illustrates the four main datasets used for multimodal failure detection.

\begin{figure}[t]
  \centering
  \includegraphics[width=\linewidth]{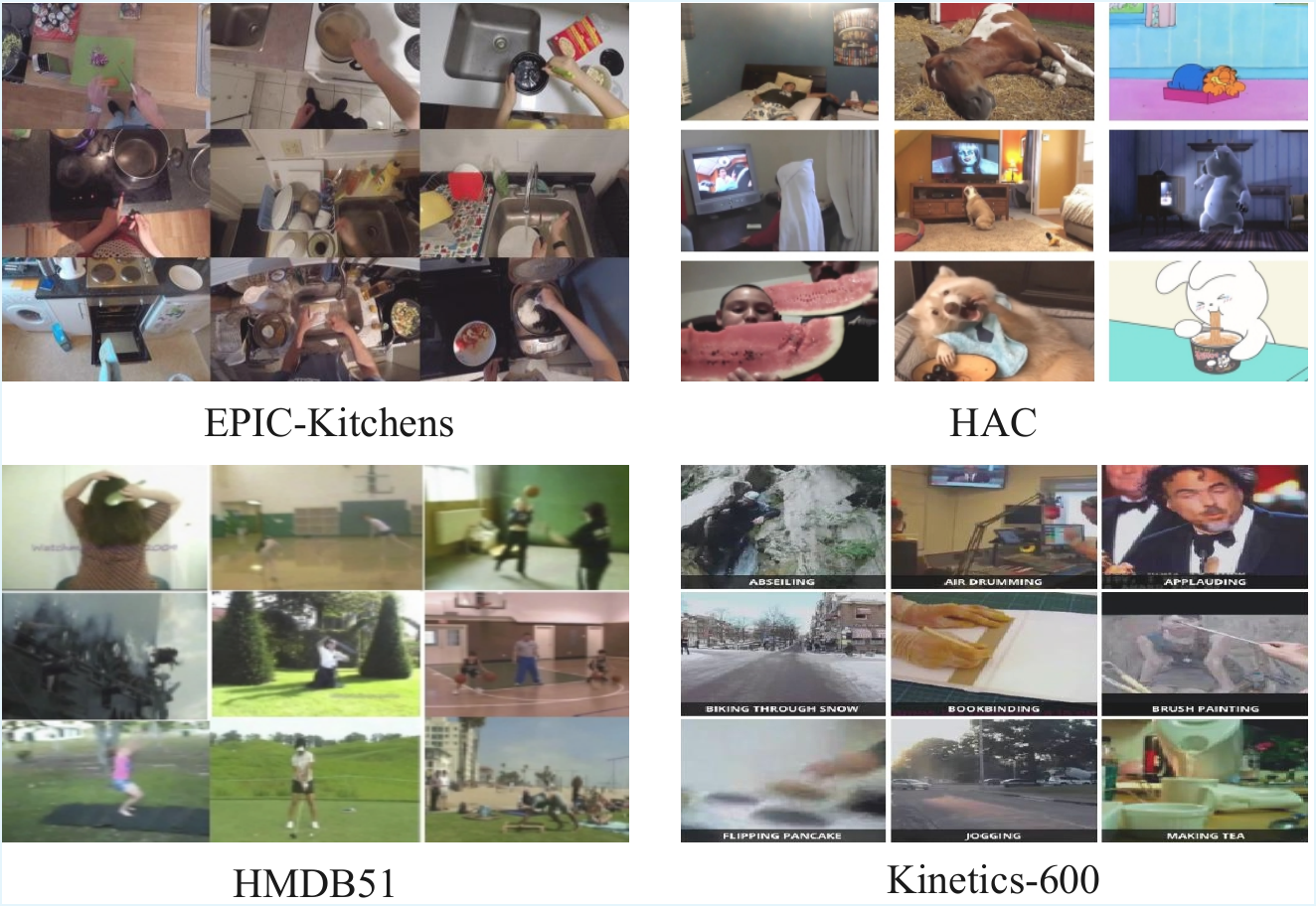}
  \vspace{-0.7cm}
  \caption{An illustration of the four main datasets we used for multimodal failure detection.}
  \label{fig:dataset}
\end{figure}

\textbf{HMDB51} is an action recognition dataset comprising $6{,}766$ video clips distributed across $51$ action categories. The videos are sourced from digitized movies and YouTube. This dataset includes both video and optical flow modalities.
\textbf{EPIC-Kitchens} is an egocentric video dataset capturing daily kitchen activities recorded by $32$ participants. For our experiments, we utilize a subset of $4{,}871$ video clips from participant P22, encompassing eight common actions (\emph{put}, \emph{take}, \emph{open}, \emph{close}, \emph{wash}, \emph{cut}, \emph{mix}, and \emph{pour}). The dataset provides video and optical flow modalities.  
\textbf{Kinetics-600} is a large-scale action recognition dataset containing approximately $480{,}000$ $10$-second clips distributed across $600$ action classes. Following~\citep{dong2024multiood}, we utilize a subset of $100$ classes, resulting in $24{,}981$ video clips for our study. This dataset offers video, audio, and optical flow modalities.
\textbf{HAC} contains $3{,}381$ video clips featuring seven action categories (e.g., \emph{sleeping}, \emph{watching TV}, \emph{eating}, \emph{running}) performed by humans, animals, and cartoon characters. The dataset includes video, optical flow, and audio modalities.
\textbf{UCF101} is a diverse video action recognition dataset consisting of $13{,}320$ clips across $101$ action classes. The videos, sourced from YouTube, exhibit significant variation in camera motion, object appearance, scale, pose, viewpoint, and background. This dataset provides video and optical flow modalities.

\section{More Implementation Details}
\noindent\textbf{Pseudo Code for Multimodal Feature Swapping.} We provide the pseudo code for multimodal outlier synthesis in \cref{alg:failure_synthesis}, where we dynamically swap multimodal feature embeddings and assign them corresponding soft labels. MFS naturally generalizes beyond the two-modality setting. In \cref{alg:failure_synthesis2}, we illustrate how to extend MFS to three modalities. We extend MFS to this setting by randomly selecting a start index for each modality and swapping them cyclically: $\mathbf{E}^1 \rightarrow \mathbf{E}^2$, $\mathbf{E}^2 \rightarrow \mathbf{E}^3$, and $\mathbf{E}^3 \rightarrow \mathbf{E}^1$, thus creating outlier features for all three modalities.  This generalized formulation enables flexible synthesis of multimodal failure cases, capturing higher-order inconsistencies (e.g., tri-modal conflicts) and scaling effectively to heterogeneous sensor configurations commonly found in real-world systems.

\begin{algorithm}[t]
\caption{Multimodal Feature Swapping}
\label{alg:failure_synthesis}
\begin{algorithmic}
\STATE {\bfseries Input:} ID feature $\mathbf{E} = [\mathbf{E}^1, \mathbf{E}^2]$, where $\mathbf{E}^1$ and $\mathbf{E}^2$ are from modality $1$ and $2$; minimum and maximum number $n_{\text{min}}$ and $n_{\text{max}}$ for swapping; ground-truth one-hot label $\mathbf{y}_{\text{true}}$ for $\mathbf{E}$ and  $\mathbf{y}_{\text{outlier}} = C+1$.

\STATE {\bfseries Pseudo Code:}


\STATE \quad Sample $n_{\text{swap}} \sim \mathcal{U}(n_{\text{min}}, n_{\text{max}})$, $\lambda = \frac{n_{\text{swap}}}{n_{\text{max}}}$;
\STATE \quad Randomly select start indices $s_1$, $s_2$ for modality $1$ and $2$;
\STATE \quad Clone features $\widetilde{\mathbf{E}}^{1} \leftarrow \mathbf{E}^1$, $\widetilde{\mathbf{E}}^{2} \leftarrow \mathbf{E}^2$;
\STATE \quad Swap $n_{\text{swap}}$ dimensions across modalities:
\[
\widetilde{\mathbf{E}}^{1}[ s_1{:}s_1{+}n_{\text{swap}}] \leftarrow \mathbf{E}^2[ s_2{:}s_2{+}n_{\text{swap}}]
\]
\[
\widetilde{\mathbf{E}}^{2}[ s_2{:}s_2{+}n_{\text{swap}}] \leftarrow \mathbf{E}^1[ s_1{:}s_1{+}n_{\text{swap}}]
\]


\STATE \quad Generate label for outlier feature $\mathbf{y}_{\text{swapped}} = (1 - \lambda) \mathbf{y}_{\text{true}} + \lambda \mathbf{y}_{\text{outlier}}$.

\STATE {\bfseries Output:} Multimodal outlier feature $\mathbf{E}_o = [\widetilde{\mathbf{E}}^{1}, \widetilde{\mathbf{E}}^{2}]$ and label $\mathbf{y}_{\text{mixed}}$.
\end{algorithmic}
\end{algorithm}
\begin{algorithm}[t]
\caption{Multimodal Feature Swapping with Three Modalities}
\label{alg:failure_synthesis2}
\begin{algorithmic}
\STATE {\bfseries Input:} ID feature $\mathbf{E} = [\mathbf{E}^1, \mathbf{E}^2, \mathbf{E}^3]$, where $\mathbf{E}^1$, $\mathbf{E}^2$, and $\mathbf{E}^3$ are from modality $1$, $2$, and $3$; minimum and maximum number $n_{\text{min}}$ and $n_{\text{max}}$ for swapping; ground-truth one-hot label $\mathbf{y}_{\text{true}}$ for $\mathbf{E}$ and  $\mathbf{y}_{\text{outlier}} = C+1$.

\STATE {\bfseries Pseudo Code:}


\STATE \quad Sample $n_{\text{swap}} \sim \mathcal{U}(n_{\text{min}}, n_{\text{max}})$, $\lambda = \frac{n_{\text{swap}}}{n_{\text{max}}}$;
\STATE \quad Randomly select start indices $s_1$, $s_2$, $s_3$ for modality $1$, $2$, and $3$;
\STATE \quad Clone features $\widetilde{\mathbf{E}}^{1} \leftarrow \mathbf{E}^1$, $\widetilde{\mathbf{E}}^{2} \leftarrow \mathbf{E}^2$, $\widetilde{\mathbf{E}}^{3} \leftarrow \mathbf{E}^3$;
\STATE \quad Swap $n_{\text{swap}}$ dimensions across modalities:
\[
\widetilde{\mathbf{E}}^{1}[ s_1{:}s_1{+}n_{\text{swap}}] \leftarrow \mathbf{E}^3[ s_3{:}s_3{+}n_{\text{swap}}]
\]
\[
\widetilde{\mathbf{E}}^{2}[ s_2{:}s_2{+}n_{\text{swap}}] \leftarrow \mathbf{E}^1[ s_1{:}s_1{+}n_{\text{swap}}]
\]
\[
\widetilde{\mathbf{E}}^{3}[ s_3{:}s_3{+}n_{\text{swap}}] \leftarrow \mathbf{E}^2[ s_2{:}s_2{+}n_{\text{swap}}]
\]


\STATE \quad Generate label for outlier feature $\mathbf{y}_{\text{swapped}} = (1 - \lambda) \mathbf{y}_{\text{true}} + \lambda \mathbf{y}_{\text{outlier}}$.

\STATE {\bfseries Output:} Multimodal outlier feature $\mathbf{E}_o = [\widetilde{\mathbf{E}}^{1}, \widetilde{\mathbf{E}}^{2}, \widetilde{\mathbf{E}}^{3}]$ and label $\mathbf{y}_{\text{mixed}}$.
\end{algorithmic}
\end{algorithm}

\noindent\textbf{Extension of ACL to More Modalities.}
Our framework is not limited to two modalities and can be easily extended to $M$ modalities. Given a training sample $\bx$ with $M$ modalities, we obtain prediction confidence score $\textit{conf}$ from the combined embeddings of all modalities, and $\textit{conf}_1$, $\textit{conf}_2$, ..., $\textit{conf}_M$ from each modality. The Adaptive Confidence Loss can then be defined as:
\begin{equation}
  \mathcal{L}_{\text{acl}}
  =\frac{1}{M} \sum_{i=1}^{M} \max(0, \textit{conf}_i - \textit{conf}) .
\end{equation}

\section{More Ablation Studies}

\noindent\textbf{Parameter Sensitivity.} 
We evaluate the sensitivity of our framework to two key hyperparameters using the HMDB51 dataset. First, the maximum swapping dimension \( n_{max} \) for Multimodal Feature Swapping (MFS) was varied among $128$, $256$, and $512$, with results presented in Table \ref{tab:FD-cutmix}. An \( n_{max} \) value of $256$ yielded the optimal balance, achieving robust performance across all evaluation metrics.
Subsequently, with \( n_{max} \) fixed at $256$, the weight $\lambda_{\text{acl}}$ for Adaptive Confidence Loss (ACL) was evaluated over the set $0.2$, $0.5$, $1.0$, and $2.0$ (detailed in Table \ref{tab:FD-ada_conf_diff}). A value of \( \lambda_{\text{acl}} = 2.0 \) consistently delivered the strongest FD performance.
Importantly, the framework's performance remained stable across both parameter sweeps, underscoring its robustness to variations in these hyperparameters. 


\setlength{\tabcolsep}{2pt}
\begin{table}[t]
\begin{minipage}[t]{\columnwidth}
\centering
\begin{adjustbox}{width=0.65\linewidth}
\begin{tabular}{l | c c c c}
\hline
\multicolumn{1}{c|}{} & AURC$\downarrow$ & AUROC$\uparrow$ & FPR95$\downarrow$ & ACC$\uparrow$ \\
\hline

128         &29.08& 88.27&49.57  & \multicolumn{1}{c}{86.66} \\

256        & 25.11& 90.55 &46.22  & \multicolumn{1}{c}{86.43}  \\

512 & 25.34& 90.98 &43.90  & \multicolumn{1}{c}{85.97}\\

\hline
\end{tabular}
\end{adjustbox}

\vspace{-0.2cm}
\caption{Effect of $n_{max}$ in MFS.}

\label{tab:FD-cutmix}
\end{minipage}
\begin{minipage}[t]{\columnwidth}

\centering
\begin{adjustbox}{width=0.65\linewidth}
\begin{tabular}{l | c c c c}
\hline
\multicolumn{1}{c|}{} & AURC$\downarrow$ & AUROC$\uparrow$ & FPR95$\downarrow$ & ACC$\uparrow$ \\
\hline

0.2         &24.42& 90.19&46.15  & \multicolumn{1}{c}{86.66} \\

0.5        & 24.18&90.28 &46.22  & \multicolumn{1}{c}{87.00}  \\

1.0 & 23.11& 90.93&42.11  & \multicolumn{1}{c}{86.55}\\

2.0 & 19.97& 92.02 &41.96  & \multicolumn{1}{c}{87.23}\\

\hline
\end{tabular}
\end{adjustbox}

\vspace{-0.2cm}
\caption{Effect of weight $\lambda_{\text{acl}}$ for ACL.}

\label{tab:FD-ada_conf_diff}
\end{minipage}
\end{table}












\noindent\textbf{Generalize to Image Classification.} We evaluated on Office-Home Dataset~\cite{venkateswara2017deep} by treating images from distinct domains (i.e., Art and RealWorld) as different modalities and fusing them. We show that ACR can be generalized to the image classification task in \cref{tab:office}.

\begin{table}[t!]
\setlength\tabcolsep{3.0pt}
\centering
\resizebox{0.7\linewidth}{!}{%
\begin{tabular}{|l|c|c|c|c|}
\hline
 & AURC$\downarrow$ & AUROC$\uparrow$ & FPR95$\downarrow$ & ACC$\uparrow$ \\ \hline
MSP &  30.24 &86.63   & 62.75   & 86.03  \\ \hline
 ACR  &  \textbf{15.38} & \textbf{87.79}  & \textbf{58.06}   &  \textbf{91.51} \\ \hline
\end{tabular}
}
\vspace{-0.2cm}
\caption{Results on Office-Home dataset by fusing images from distinct domains.}
\label{tab:office}
\end{table}

\section{Further Discussions on Proposed Modules}
\label{sec:dis}







\subsection{Mechanism of ACL}
In Sec. 2.2, we show failures in multimodal systems frequently coincide with confidence degradation. We provided theoretical analysis in Appendix Sec. 1 and show that when distinct modalities are predictive of a target, adding modalities cannot increase conditional entropy. Hence, we introduce ACL to enforce better information integration, ensuring that fused evidence yields more separable confidence between correct and incorrect samples. The mechanism relies on the interplay between ACL and the cross-entropy loss ($\mathcal{L}_{\text{cls}}$). When predictions are \textit{\textbf{correct}}, $\mathcal{L}_{\text{cls}}$ and ACL work together to drive the fused confidence higher.
On \textit{\textbf{incorrect}} samples, $\mathcal{L}_{cls}$ pushes fused confidence down. If a unimodal branch remains overconfident, ACL remains high. To minimize total loss, gradients propagate to suppress the overconfident unimodal encoder, as the fused head cannot rise without violating $\mathcal{L}_{cls}$. This ensures that \textit{\textbf{ACL boosts confidence only for correct predictions while avoiding overconfidence in incorrect predictions (\cref{tab:merged_confidence})}}, hence (together with MFS) widening the confidence separation that directly benefits failure detection. 
\begin{table}[t]
\centering
\setlength{\tabcolsep}{3pt}
\resizebox{\columnwidth}{!}{
\begin{tabular}{lcccccccc}
\toprule
\multirow{2}{*}{\textbf{Method}} 
& \multicolumn{4}{c}{\textbf{Correct Predictions} ($\uparrow$)} 
& \multicolumn{4}{c}{\textbf{Incorrect Predictions} ($\downarrow$)} \\
\cmidrule(lr){2-5} 
\cmidrule(lr){6-9}
& HMDB51 & EPIC & HAC & Kinetics 
& HMDB51 & EPIC & HAC & Kinetics \\
\midrule
w/o ACL 
& 0.95 & 0.91 & 0.94 & 0.92 
& 0.80 & 0.73 & 0.74 & 0.64 \\

w/ ACL 
& \textbf{0.96} & \textbf{0.93} & \textbf{0.96} & \textbf{0.93}
& \textbf{0.63} & \textbf{0.69} & \textbf{0.62} & \textbf{0.57} \\
\bottomrule
\end{tabular}
}
\vspace{-0.2cm}
\caption{Average fused confidence on \textbf{correct} and \textbf{incorrect} predictions.}
\label{tab:merged_confidence}
\end{table}

\noindent\textbf{How Adaptive Confidence Loss Addresses Unimodal Overconfidence.} Imagine a situation where one modality (e.g., audio) is ambiguous or corrupted, causing its corresponding unimodal network to make a confidently wrong prediction (high $\textit{conf}_1$). The other modality (e.g., video) provides clear evidence for the correct class, leading to a correct, high-confidence prediction ($\textit{conf}_2$). A standard fusion model might struggle to integrate these conflicting signals. If the fusion process results in a moderate fused confidence $\textit{conf}$ that is lower than the erroneously high $\textit{conf}_1$, it will incur a large ACL penalty from the $\max(0, \textit{conf}_1 - \textit{conf})$ term. To minimize this loss, the model can't easily force the fused confidence up without a good reason, as that would be penalized by the cross-entropy loss term $\mathcal{L}_{\text{cls}}$ if the prediction is wrong. Instead, a more effective way to reduce the ACL loss is to lower the confidence of the overconfident unimodal prediction ($\textit{conf}_1$). Through repeated exposure to such conflicting examples during training, the unimodal feature extractor learns a valuable lesson. It learns that producing a high-confidence prediction that is likely to be contradicted by another modality is "expensive" in terms of the overall loss. Therefore, it adjusts its weights to become more cautious and better calibrated. The unimodal network learns to associate maximum confidence not just with strong internal features, but with features that are also robust and likely to lead to cross-modal agreement. As shown in \cref{fig:image1} and \cref{fig:image2}, training with ACL effectively alleviates unimodal overconfidence on incorrect predictions across all datasets.



\begin{figure}[t]
    \centering
    \begin{minipage}{0.48\textwidth}
        \centering
        \includegraphics[width=\linewidth]{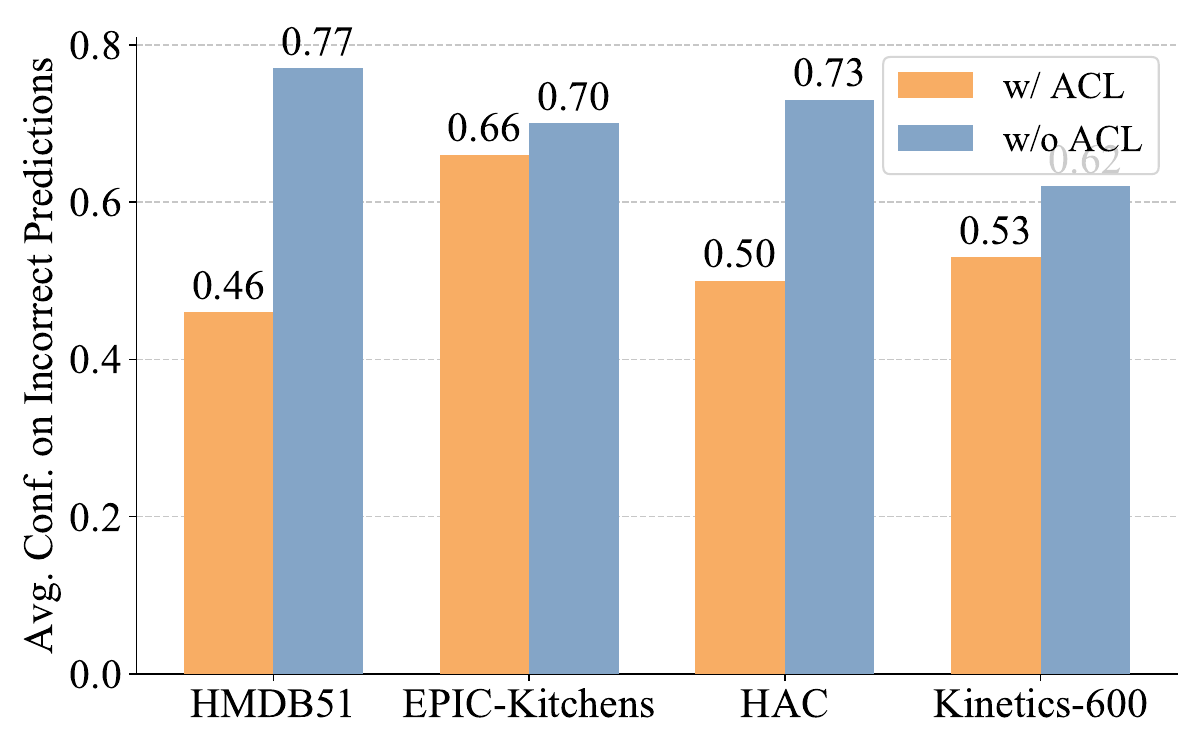}
        \vspace{-0.8cm}
        \caption{The average confidence on incorrect predictions for video modality w/ and w/o ACL.}
        \label{fig:image1}
    \end{minipage} \hfill
    \begin{minipage}{0.48\textwidth}
        \centering
        \includegraphics[width=\linewidth]{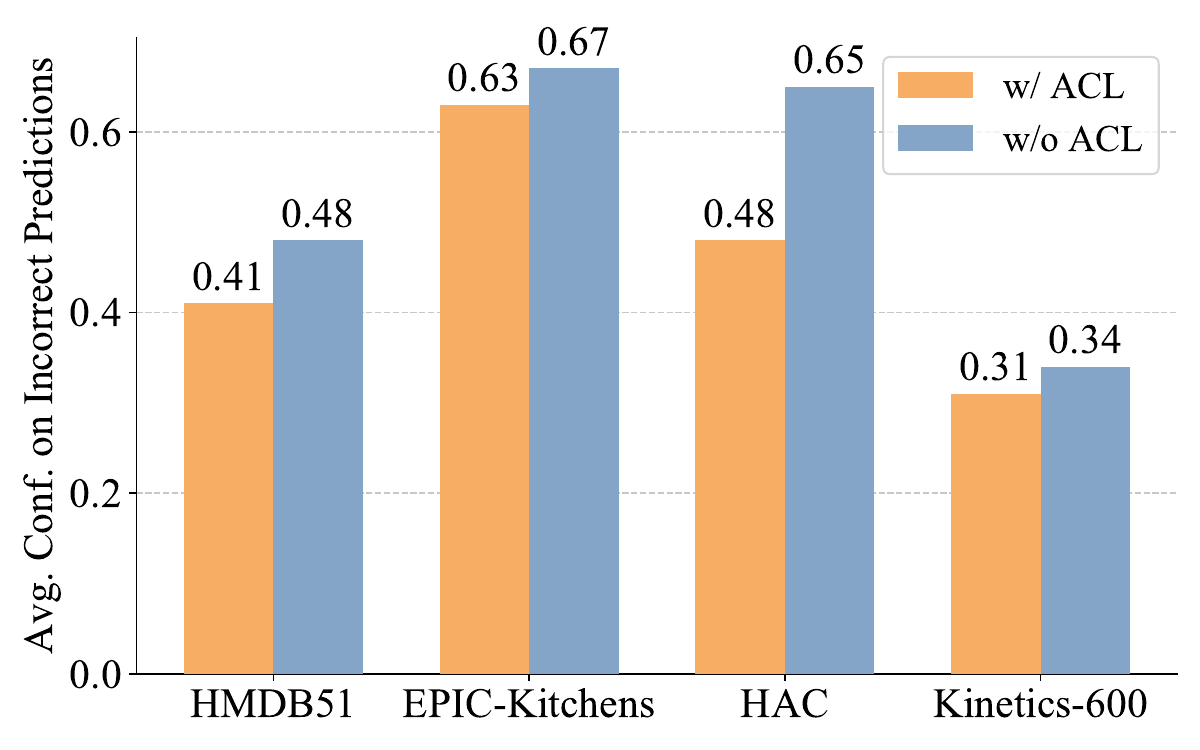}
        \vspace{-0.8cm}
        \caption{The average confidence on incorrect predictions for optical flow modality w/ and w/o ACL.}
        \label{fig:image2}
    \end{minipage}
\end{figure}

\subsection{Mechanism of MFS}
\noindent\textbf{Multimodal Feature Swapping.} Our MFS module isn't designed to replicate the entire, complex distribution of all possible real-world failures, as this would be intractable. Instead, MFS serves as a principled and targeted regularizer that exposes the model to a critical and common failure mode in multimodal systems: cross-modal inconsistency. Our core hypothesis is that many real-world errors arise from conflicting or ambiguous signals between modalities. MFS directly simulates this failure mode by swapping feature segments.
Unlike generic outlier exposure (OE) or manifold mixup, MFS creates challenging, near-ID hard negatives by preserving intra-modal semantics while breaking cross-modal consistency. Ultimately, the model learns \textit{which structurally ambiguous cases should stay low-confidence}.

The feature swapping operation preserves local modality structure within unswapped segments, maintaining realistic intra-modal patterns while disrupting cross-modal alignment through swapped segments, simulating realistic sensor conflicts or temporal misalignments. It can also generate outliers of varying difficulty by adjusting the $n_{\text{swap}}$ parameter, ranging from subtle inconsistencies when $n_{\text{swap}}$ is small to severe cross-modal conflicts when $n_{\text{swap}}$ is large.

Training on these synthesized samples forces the model to develop a more robust fusion mechanism that must critically assess the semantic agreement between modalities. This closer examination of cross-modal consistency improves its ability to detect real-world misclassifications, which often exhibit similar signal conflicts. The effectiveness of this approach is validated by our extensive empirical results, which demonstrate that learning to reject these synthetic inconsistencies directly translates to better identification of real-world errors. \cref{fig:mfs_illus} gives a detailed illustration of how MFS enables the detection and rejection of uncertain predictions, thereby improving model reliability. The soft labeling parameter $\lambda$ provides calibrated training signals proportional to the degree of synthetic corruption, allowing the model to learn fine-grained uncertainty estimation.

\noindent\textbf{Why Contiguous Swapping and Effective?} Real multimodal failures are typically localized/structured rather than i.i.d. white noise (e.g., a few seconds of missing audio, a region of blurred video, or temporal misalignment). Swapping contiguous blocks provides a simple inductive bias that introduces localized disruption instead of globally destroying the representation. Importantly, this choice is \textit{empirically validated}: contiguous swapping outperforms random perturbations (noise/drop/random mixing) in our ablations in Tab.~7.

\noindent\textbf{Semantic Preservation.}
MFS swaps only a subset of feature dimensions, leaving most dimensions unchanged and thereby \textit{preserving intra-modal semantics}. 
As evidence, predictions from original and MFS features agree in 99.2\% of cases \textit{(indicating preserved semantics)}, with the MFS features yielding {$0.1$ lower confidence on average}.

\noindent\textbf{Why MFS Succeeds where OE Fails?} OE mainly trains rejection under semantic shift (OOD). In contrast, \textit{multimodal failures often occur within ID semantics} but with conflicting/ambiguous evidence across modalities. MFS explicitly synthesizes such cross-modal inconsistency, producing samples that remain semantically ID yet structurally unreliable. 



\begin{figure}[t]
  \centering
  \includegraphics[width=0.8\linewidth]{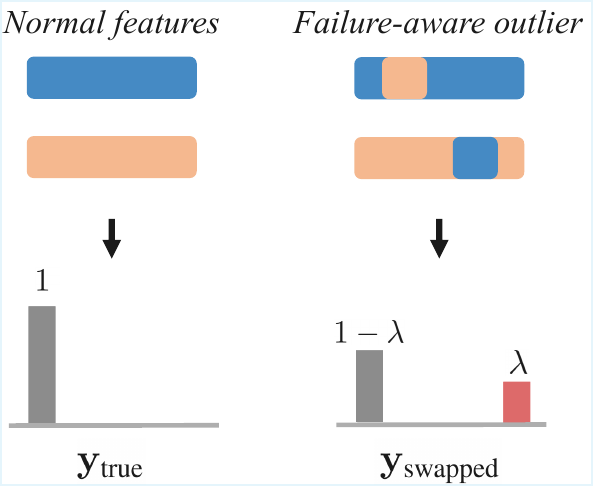}
  \vspace{-0.3cm}
  \caption{An illustration of how MFS enables the detection and rejection of uncertain predictions, thereby improving model reliability. The failure-aware outliers generated by MFS are treated as uncertain because they introduce cross-modal inconsistency. Accordingly, their soft label $\mathbf{y}_{\text{swapped}}$ reduces the ground-truth class probability from $1$ to $1-\lambda$, compared to $\mathbf{y}_{\text{true}}$. Training with cross-entropy loss on $\mathbf{y}_{\text{swapped}}$ ensures that the prediction confidence on failure-aware outliers remains lower than on normal features, which is exactly the desired outcome.}
  \label{fig:mfs_illus}
\end{figure}

\noindent\textbf{Difference between MFS and Feature Mixing.}
While both MFS and Feature Mixing~\citep{liu2025fm} generate outliers by swapping feature dimensions between modalities, their primary goal and supervision strategies are fundamentally different. MFS is designed for misclassification detection. Its goal is to teach the model to identify when it's making a mistake on in-distribution data. The model learns to classify the outlier using a specific soft label that mixes the true class with a new "outlier" class. This helps it recognize failure-aware samples that have conflicting internal signals. Feature Mixing is designed for out-of-distribution detection. Its goal is to teach the model to identify data that comes from unknown classes not seen during training. The model learns to be confused by the outlier. The entropy loss forces the model to be uncertain, outputting a flat, uniform probability distribution over all known classes. This teaches it to assign low confidence to anything that doesn't look like in-distribution data.

As for the implementation, MFS selects a single contiguous block of feature dimensions per modality (sample start indices $s_1$, $s_2$, and swap $n_{swap}$ consecutive dims). The size $n_{swap}$ is sampled from $\mathcal{U}(n_{\text{min}}, n_{\text{max}})$, so it can control how close and far the synthesized outliers lie from ID. Feature Mixing randomly samples an arbitrary subset of $N$ feature dimensions from each modality (non-contiguous, independent indices) and swaps those positions between modalities (simple index-based swap). Our ablation results show that MFS achieves superior performance compared to Feature Mixing in the context of multimodal failure detection.

\end{document}